\mathchardef\mhyphen="2D
\newtheorem{thm}{Theorem}
\newtheorem{lemma}{Lemma}
\newtheorem{prop}{Proposition}
\begin{document}

\title{Recursive Sparse Point Process Regression with Application to Spectrotemporal Receptive Field Plasticity Analysis}

% author names and affiliations
% use a multiple column layout for up to three different
% affiliations
\author{Alireza~Sheikhattar,~Jonathan~B.~Fritz, Shihab~A.~Shamma,~and~Behtash~Babadi
\thanks{A. Sheikhattar is with the Department of Electrical and Computer Engineering (ECE), University of Maryland, College Park, MD; J. B. Fritz is with the Institute for Systems Research (ISR), University of Maryland, College Park, MD; S. A. Shamma and B. Babadi are with ECE and ISR, University of Maryland, College Park, MD (e-mails: arsha89@umd.edu; ripple@umd.edu; sas@umd.edu; behtash@umd.edu).}%
\thanks{Corresponding author: B. Babadi (e-mail: behtash@umd.edu).}}

\maketitle

\begin{abstract}
We consider the problem of estimating the sparse time-varying parameter vectors of a point process model in an online fashion, where the observations and inputs respectively consist of binary and continuous time series. We construct a novel objective function by incorporating a forgetting factor mechanism into the point process log-likelihood to enforce adaptivity and employ $\ell_1$-regularization to capture the sparsity. We provide a rigorous analysis of the maximizers of the objective function, which extends the guarantees of compressed sensing to our setting. We construct two recursive filters for online estimation of the parameter vectors based on proximal optimization techniques, as well as a novel filter for recursive computation of statistical confidence regions. Simulation studies reveal that our algorithms outperform several existing point process filters in terms of trackability, goodness-of-fit and mean square error. We finally apply our filtering algorithms to experimentally recorded spiking data from the ferret primary auditory cortex during attentive behavior in a click rate discrimination task. Our analysis provides new insights into the time-course of the spectrotemporal receptive field plasticity of the auditory neurons. 
\end{abstract}

\begin{IEEEkeywords} Adaptive filtering; point process models; compressed sensing; neural signal processing; receptive field plasticity. \end{IEEEkeywords}

\section{Introduction}

Analyses of spiking activity recorded from sensory neurons have revealed three main features: first, neuronal activity is stochastic and exhibits significant variability across trials; second, the spiking statistics often undergo rapid changes referred to as neuronal plasticity, in order to adapt to changing stimulus salience and behavioral context; and third, the tuning characteristics of sensory neurons to the stimuli exhibit a degree of sparsity. Examples include place cells in the hippocampus \cite{frank2004hippocampal} and spectrotemporally tuned cells in the primary auditory cortex \cite{depireux2001spectro}. Hence, in order to gain insight into the functional mechanism of the underlying neural system, it is crucial to have a mathematical theory to simultaneously capture the stochasticity, dynamicity and sparsity of neuronal activity.

On one hand, the theory of point processes \cite{Daley} has been recently adopted as a mathematical framework to model the stochasticity of neuronal data. Traditionally, these models have been used to predict the likelihood of self-exciting processes such as earthquake occurrences \cite{ogata1988statistical, vere1970stochastic}, but have recently found significant applications in the analysis of neuronal data \cite{brown2004multiple, brown2001analysis, smith2003estimating, paninski2004maximum, paninski2007statistical, pillow2011model, truccolo2005point}.

On the other hand, classic results in signal processing such as the Least Mean Squares (LMS) and Recursive Least Squares (RLS) algorithms \cite{haykin2008adaptive} have created a framework to efficiently capture the dynamics of the parameters in linear observation models. Existing solutions in computational neuroscience have adopted this framework to estimate the dynamics of neuronal activity. For instance, in \cite{brown2001analysis} an LMS-type point process filter was introduced to study plasticity in hippocampal neurons. In \cite{eden2004dynamic}, more general adaptive filtering solutions based on approximations to the Chapman-Kolmogorov equation were introduced. Although quite powerful in analyzing neuronal data, these solutions do not account for the sparsity of the underlying parameters.

Finally, the theory of compressed sensing (CS) has provided a novel methodology for measuring and estimating statistical models governed by sparse underlying parameters \cite{donoho2006compressed, candes2006compressive, candes2008introduction}. In particular, for \emph{static} linear and generalized linear models (GLM) with random covariates and sparsity of the parameters, the CS theory characterizes sharp trade-offs between the number of measurement, sparsity, and estimation accuracy \cite{candes2006compressive, Negahban}. The sparse solutions of CS are typically achieved using batch-mode convex programs and greedy techniques. In online settings, sparse adaptive filters have only been introduced in the context of linear systems governed by sparse parameters such as communication channels \cite{babadi2010sparls,kalouptsidis2011adaptive,dumitrescu2012greedy}. 

Despite significant progress in all these research fronts, a unified framework to simultaneously capture the stochasticity, dynamicity and sparsity of neuronal data is lacking. In this paper, we close this gap by integrating techniques from point process theory, adaptive filtering, and compressed sensing. To this end, we consider the problem of estimating time-varying stimulus modulation coefficients (e.g., receptive fields) from a sequence of binary observations in an online fashion. We model the spiking activity by a conditional Bernoulli point process, where the conditional intensity is a logistic function of the stimulus and its time lags. We will then design a novel objective function by incorporating the forgetting factor mechanism of RLS-type algorithms into the $\ell_1$-regularized maximum likelihood estimation of the point process parameters. We will present theoretical guarantees that extend those of CS theory and characterize fundamental trade-offs between the number of measurements, forgetting factor, model compressibility, and estimation error of the underlying point processes in the non-asymptotic regime. We will next develop two adaptive filters for recursive estimation of the objective function based on proximal gradient techniques, as well as a filter for recursive computation of statistical confidence regions.

In order to validate our algorithms, we provide simulation studies which reveal that the proposed adaptive filtering algorithms significantly outperform existing point process filters in terms of goodness-of-fit, mean square error and trackability. We finally apply our proposed filters to multi-unit spiking data from ferret primary auditory cortex (A1) during passive stimulus presentation and during performance of a click rate discrimination task \cite{fritz2003rapid} in order to characterize the spectrotemporal receptive field (STRF) plasticity of A1 neurons. Application of our algorithm to these data provides new insights into the time course of attention-driven STRF plasticity, with over 3 orders of magnitude increase in temporal resolution from minutes to centiseconds, while capturing the underlying sparsity in a robust fashion. Aside from their theoretical significance, our results are particularly important in light of the recent technological advances in neural prostheses, which require real-time robust neuronal system identification from limited data.

The rest of the paper is organized as follows: In Section \ref{prelim}, we present our notational conventions, preliminaries and problem formulation. In Section \ref{sec:main}, we introduce the main theoretical results of this paper, including the construction and stability analysis of the objective function, recursive filter development, and computation of confidence regions. Section \ref{sec:applications} provides numerical simulations as well as application to real data, followed by our concluding remarks in Section \ref{sec:conclusion}. Technical details of Section \ref{sec:main} are presented in Appendices \ref{proof}--\ref{app:conf}.

\section{Preliminaries and Problem Definition}\label{prelim}

We {first} give a brief introduction to point process models (see \cite{Daley} for a detailed treatment). We will use the following {notation} throughout the paper. Parameter vectors are denoted by bold-face greek letters. For example, $\boldsymbol{\omega}=[\omega_1,\omega_2,\cdots,\omega_M]'$ denotes an $M$-dimensional parameter vector, with $[\cdot]'$ denoting the transpose operator.

Consider a stochastic process defined by a sequence of discrete events at random points in time, noted by $t_1^J = [t_1,t_2,\cdots,t_J]'$, and a counting measure {given by}
\begin{equation*}
\label{sequence}
dN(t) = \sum_{k=1}^J\delta(t-t_k), \quad \text{and} \quad N(t) = \int_0^tdN(u),
\end{equation*}
where $\delta(.)$ is the Dirac's measure. The Conditional Intensity Function (CIF) for this process, denoted by $\lambda_{t|{H_t}}$, is defined as
\begin{equation}
\label{CIF_def}
\lambda({t|H_t})  := \lim_{\varepsilon \rightarrow0}\frac{\mathbb{P}\left(N(t+\varepsilon)-N(t)=1|H_t\right)}{\varepsilon},
\end{equation}
where  $H_t$ denotes the history of the process as well as the covariates up to time $t$. {The CIF can be interpreted as the \textit{instantaneous rate} given the history of the process and the covariates. A point process with a CIF  given by $\lambda({t|H_t})$ is defined as:}
\begin{enumerate}
\item $N(0)=0$
\item Given $0=t_0<t_1<t_2<\cdots$, the random variables $N(t_k)-N(t_{k-1})$ are \textit{conditionally} mutually independent.
\item For any $0\leq t_1<t_2$, $N(t_2)-N(t_1)$ is a Poisson random variable with probability distribution
\end{enumerate}
\begin{equation}
\nonumber \mathbb{P}\Big(N(t_2)-N(t_1)=k \Big) = \frac{\left(\int_{t_1}^{t_2} \lambda({t|H_t})dt\right)^k e^{-\int_{t_1}^{t_2} \lambda({t|H_t})dt}}{k!}.
\end{equation}
A point process model is fully characterized by its CIF. For instance, $\lambda(t | H_t) = \lambda$ corresponds to the homogenous Poission process with rate $\lambda$. A discretized version of this process can be obtained by binning $N(t)$ within an observation interval of $[0, \mathcal{T}]$ by bins of length $\Delta$, that is
\begin{equation}
\label{discrete}
n_t := N(t\Delta)-N((t-1)\Delta), \;\;\;\; t=1,2,\cdots,T
\end{equation}
where $T := \lceil \mathcal{T} / \Delta \rceil$. Throughout this paper, $\{n_t\}_{t=1}^T$ will be considered as the observed spiking sequence, which will be used for estimation purposes. Also, by approximating Eq. (\ref{CIF_def}) for small $\Delta \ll 1$, and defining $\lambda_t := \lambda(t \Delta | H_{t\Delta})$, we have:
\begin{equation}
\label{bernoulli}
   \begin{array}{ll}
       \mathbb{P}(n_t=0) =  1-\lambda_{t}\Delta + o(\Delta),\\
	   \mathbb{P}(n_t=1) =  \lambda_{t}\Delta + o(\Delta),\\       
       \mathbb{P}(n_t\geq2) = o(\Delta).
   \end{array}
\end{equation}
In discrete time, the orderliness of the process is equivalent to the requirement that with high probability not more than one event fall into any given bin. In practice, this can always be achieved by choosing $\Delta$ small enough. An immediate consequence of Eq. (\ref{bernoulli}) is that $\{ n_t\}_{t=1}^T$ can be approximated by a sequence of Bernoulli random variables with success probabilities $\{ \lambda_t \Delta \}_{t=1}^T$. 

A popular class of models for the CIF is given by Generalized Linear Models (GLM). In its general form, a GLM consists of two main components: an observation model (which is given by (\ref{bernoulli}) in this paper) and an equation expressing some (possibly nonlinear) function of the observation mean as a \textit{linear} combination of the covariates. In neuronal systems, the covariates consist of extrinsic covariates (e.g., neural stimuli) as well as intrinsic covariates (e.g., the history of the process). In this paper, we only consider GLM models with purely extrinsic covariates, although most of our results can be generalized to incorporate intrinsic covariates as well.

Let $s_t$ denote the stimulus at time bin $t$, $[\theta_0,\theta_1,\cdots,\theta_{M-2}]'$ denote the vector of stimulus modulation parameters, and $\mu$ denote the baseline firing rate. We adopt a logistic regression model for the CIF as follows:
\begin{align}
\operatorname{logit}(\lambda_t \Delta) := \log\left(\frac{\lambda_t \Delta}{1-\lambda_t \Delta}\right) = \mu + \sum_{i=0}^{M-2} \theta_i s_{t-i}
\label{logit1}
\end{align}
By defining $\boldsymbol{\omega} := [\mu, \theta_0,\theta_1,\cdots,\theta_{M-2}]'$ and $\mathbf{x}_t := [1, s_t,\cdots,s_{t-M+2}]$, we can equivalently write:
\begin{align}
\lambda_t \Delta = \operatorname{logit}^{-1}(\boldsymbol{\omega}' \mathbf{x}_t) := \frac{\exp(\boldsymbol{\omega}' \mathbf{x}_t)}{1+ \exp(\boldsymbol{\omega}' \mathbf{x}_t)}
\label{logit2}
\end{align}

The model above is also known as the logistic-link CIF model. Another popular model in the computational neuroscience literature is the log-link model where $\lambda_t \Delta = \exp(\boldsymbol{\omega}' \mathbf{x}_t)$. The significance of the logistic-link model is that $\operatorname{logit}^{-1}(.)$ maps the real line $(-\infty, +\infty)$ to the unit probability interval $(0,1)$, making it a feasible model for describing statistics of binary events independent of the scaling of the covariates and modulation parameters.

Despite capturing the stimulus dependence in quite a general form, the GLM model in (\ref{logit2}) represents a static model. We therefore generalize this model to the dynamic setting by allowing temporal variability of the modulation parameters:
\begin{align}
\lambda_t \Delta = \operatorname{logit}^{-1}(\boldsymbol{\omega}_t' \mathbf{x}_t) = \frac{\exp(\boldsymbol{\omega}_t' \mathbf{x}_t)}{1+ \exp(\boldsymbol{\omega}_t' \mathbf{x}_t)}
\label{GLMdynamic}
\end{align}
where $\boldsymbol{\omega}_t := [\mu_t, \theta_{t,0},\theta_{t,1},\dots,\theta_{t,M-2}]'$ represents the time-varying parameter vector at time $t$. Throughout the rest of the paper, we refer to $\mathbf{x}_t$ and $\boldsymbol{\omega}_t$ as the covariate vector and the modulation parameter vector at time $t$, respectively. 

In our applications of interest, the modulation parameter vector $\boldsymbol{\omega}$ exhibits a degree of sparsity \cite{Brown_pp, brown_func_conn}. That is, only certain components in the stimulus modulation have significant contribution in determining the statistics of the process. These components can be thought of as the preferred or intrinsic tuning features of the underlying neuron. To be more precise, for a sparsity level $L < M$, we denote by $S \subset \{1,2,\cdots,M \}$ the support of the $L$ highest elements of $\boldsymbol{\omega}$ in absolute value, and by $\boldsymbol{\omega}_L$ the best $L$-term approximation to $\boldsymbol{\omega}$. We also define
\begin{equation}
\sigma_L(\boldsymbol{\omega}) := \|\boldsymbol{\omega}-\boldsymbol{\omega}_L\|_1
\end{equation}  
to capture the compressibility of the parameter vector $\boldsymbol{\omega}$. Recall that for $\mathbf{x} \in \mathbb{R}^M$, the $\ell_1$-norm is defined as $\| \mathbf{x} \|_1 := \sum_{i=1}^M |x_i|$. When $\sigma_L(\boldsymbol{\omega}) = 0$, the parameter $\boldsymbol{\omega}$ is called $L$-sparse, and when $\sigma_L(\boldsymbol{\omega})$ is small compared to $\| \boldsymbol{\omega}_L \|_1$, the parameter is called $L$-compressible \cite{needell2009cosamp}.

Finally, the main estimation problem of this paper can be stated as follows: \emph{given binary observations $\{n_{t}\}_{t = 1}^T$ and covariates $\{ \mathbf{x}_t \}_{t={-M+1}}^T$ from a point process with a CIF given by Eq. (\ref{GLMdynamic}), the goal is to estimate the $M$-dimensional $L$-compressible parameter vectors $\{ \boldsymbol{\omega}_t \}_{t=1}^T$ in an online and stable fashion.}

\section{Main Results}\label{sec:main}
In this section, we will first describe the construction of an appropriate objective function for addressing our main estimation problem. We will then present a rigorous analysis of the maximizers of the objective function, which extends the results of CS to our setting. Next, we will introduce two adaptive filters to recursively maximize the objective function based on proximal gradient techniques. Finally, we will outline how statistical confidence bounds can also be constructed in a recursive fashion for our estimates. 
\vspace{-3mm}
\subsection{$\ell_1$-regularized Exponentially Weighted Maximum Likelihood (ML)}
Before proceeding with the construction of the objective function, we need to introduce more notational conventions. In order to have a framework allowing multi-timescale dynamics, we consider piece-wise constant dynamics for the parameter $\boldsymbol{\omega}_t$. That is, we assume that $\boldsymbol{\omega}_t$ remains constant over windows of arbitrary length $W \ge 1$ samples, for some integer $W$. By segmenting the corresponding spiking data $\{n_t\}_{t=1}^{T}$ into $K := \frac{T}{W}$ windows of length $W$ samples each, we assume that the CIF for each time point $(k-1)W + 1 \le t \le kW$ is governed by $\boldsymbol{\omega}_t = \boldsymbol{\omega}_k$, for $k=1,2,\cdots,K$. Note that number of spiking samples $K$ is assumed to be an integer multiple of window size $W$, without loss of generality.

Invoking the Bernoulli approximation to the Poisson statistics for $\Delta \ll 1$, the log-likelihood of the observation $n_t$ at time $t$ can be expressed as:
\begin{align}
\nonumber \log p ( n_t ) & \approx n_t \log (\lambda_t \Delta) + (1-n_t) \log ( 1- \lambda_t \Delta) \\
& = n_t (\mathbf{x}'_t \boldsymbol{\omega}_t) - \log \left ( 1 + \exp \left ( \mathbf{x}'_t \boldsymbol{\omega}_t \right) \right ).
\end{align}
Assuming conditional independence of the spiking events, the joint log-likelihood of the observations within window $k$ is given by: 
\begin{align}
\nonumber \mathcal{L}_k(\boldsymbol{\omega}_k) := \sum_{j=1}^W  & n_{(k-1)W + j} \mathbf{x}_{(k-1)W+j}'\boldsymbol{\omega}_k\\
& - \log \big(1 + \exp( \mathbf{x}_{(k-1)W+j}' \boldsymbol{\omega}_k)\big) \label{LLi}
\end{align}

In order to explicitly enforce adaptivity in the log-likelihood function, we adopt the forgetting factor mechanism of the RLS algorithm, where the log-likelihood of each window is exponentially weighted regressively in time, with a forgetting factor $0 < \beta \le 1$. That is, the effective data log-likelihood up to and including window $k$ is taken to be:
\begin{equation}
\mathcal{L}^{\beta}(\boldsymbol{\omega}_k) := \sum_{i=1}^k \beta^{k-i} \mathcal{L}_i(\boldsymbol{\omega}_k) \label{LL}
\end{equation}
for some $0 < \beta \le 1$. Note what for $\beta = 1$, $\mathcal{L}^1(\boldsymbol{\omega}_k)$ coincides with the natural data log-likelihood. Moreover, if we replace the Bernoulli log-likelihood with the Gaussian log-likelihood, then $\mathcal{L}^\beta(\boldsymbol{\omega}_k)$ coincides with the conventional RLS objective function.

Next, in order to explicitly enforce sparsity, we adopt the $\ell_1$-regularization mechanism of CS. That is, at window $k$, we seek an estimate of the form:

\begin{equation}
\widehat{\boldsymbol{\omega}}_k = \underset{\boldsymbol{\omega}_k}{\operatorname{argmax}} \quad \left \{ \mathcal{L}^{\beta}(\boldsymbol{\omega}_k) - \gamma \|  \boldsymbol{\omega}_k\|_1 \right \}
\label{MainProb}
\end{equation}
where $\gamma$ is a regularization parameter controlling the trade off between the log-likelihood fit and the sparsity of estimated parameters. Our theoretical analysis in the next subsection reveals appropriate choices for $\gamma$, $\beta$ and the trade-offs therein.

\vspace{-3mm}
\subsection{Stability Analysis of the Objective Function}\label{maintheorem}

In order to quantify the trade-offs involving our choice of the objective function in Eq. (\ref{MainProb}), we proceed in the tradition of performance analysis result of the RLS algorithm  \cite{haykin2008adaptive} by characterizing the geometric properties of the estimates $\boldsymbol{\omega}_k$ in a stationary environment where $\boldsymbol{\omega_k} = \boldsymbol{\omega}$ for all $k$. Our analysis, however, is quite general and avoids ad hoc assumptions such as direct averaging or covariate independence which are usually invoked in the analysis of least squares problems.

We need to make the following technical assumptions for our analysis:

1) The stimulus sequence $\{s_t\}_{t=-M+1}^T$ consists of independent (but not necessarily identically distributed) random variables with a variance of $\sigma^2$ which are uniformly bounded by a constant  $B > 0$ in absolute value. Note that with this assumption, two successive covariate vectors, say at times $t$ and $t+1$, given respectively by $\mathbf{x}_t = [1, s_{t-M+2}, s_{t-M+3}, s_{t-M+4},\cdots, s_t]$ and $\mathbf{x}_{t+1} = [1, s_{t-M+3}, s_{t-M+4}, \cdots, s_t, s_{t+1}]$ are highly \emph{dependent}, as they have $M-3$ random variables in common. Hence, the independent assumption used in studying least squares problem is violated. 

2) We further assume that for all times $t$, $0 < p_{\min} \le \lambda_t \Delta \le p_{\max} < 1$, for some constants $p_{\min}$ and $p_{\max}$, i.e., the probability of spiking does not reach its extremal values of $0$ and $1$, but can get arbitrarily close. This assumption can be realized due to the boundedness of the covariates and appropriate normalization of the stimulus modulation coefficients, and does not result in any practical loss of generality.

We have the following theoretical result regarding the stability of the maximizers of the objective function:
\medskip

\begin{thm}\label{thm:main}
\textit{
Suppose that binary observations from a point process with a CIF given by Eq. (\ref{GLMdynamic}) are given over $K$ windows of length $W$ each. Consider a stationary environment with $\boldsymbol{\omega}_k = \boldsymbol{\omega}$ for all $k$ and suppose that $\boldsymbol{\omega}$ is $L$-compressible. Then, under assumptions (1) and (2), for a fixed positive constant $d > 0$, there exist constants $C$, $C'$, and $C''$ such that for $1 - \frac{C'}{L^2 \log M} \le \beta <1$, $K \ge \frac{\log 2}{\log (\frac{1}{\beta})}$ and a choice of $\gamma \ge C'' \sqrt{\frac{ \log M}{1-\beta}}$, any solution $\widehat{\boldsymbol{\omega}}$ to (\ref{MainProb}) satisfies the bound
\begin{align}
\nonumber \left \|\widehat{\boldsymbol{\omega}}\!-\!\boldsymbol{\omega}\right \|_2  \! \leq \!C \sqrt{\!(1\!-\!\beta) L \log M} \!+\! \sqrt{C \sigma_L(\boldsymbol{\omega})}\sqrt[4]{\!(1\!-\!\beta) L \log M},
\end{align}
with probability at least $1-\frac{5}{M^d}$. The constants $C$, $C'$, and $C''$ are only functions of $d$, $p_{\min}$, $p_{\max}$, $\sigma^2$, $B$, and $W$, and are explicitly given in Appendix \ref{proof}.}
\end{thm}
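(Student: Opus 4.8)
The plan is to follow the variational, Negahban--Wainwright style analysis of $\ell_1$-regularized $M$-estimators, adapted here to the point-process log-likelihood, the forgetting-factor weighting, and the dependent covariates. Write $\mathbf{h} := \widehat{\boldsymbol{\omega}} - \boldsymbol{\omega}$ for the error vector. Since $\widehat{\boldsymbol{\omega}}$ maximizes $\mathcal{L}^\beta(\cdot) - \gamma\|\cdot\|_1$, the basic inequality $\mathcal{L}^\beta(\widehat{\boldsymbol{\omega}}) - \gamma\|\widehat{\boldsymbol{\omega}}\|_1 \ge \mathcal{L}^\beta(\boldsymbol{\omega}) - \gamma\|\boldsymbol{\omega}\|_1$ holds. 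Expanding the concave $\mathcal{L}^\beta$ to second order about $\boldsymbol{\omega}$, with a mean-value remainder exploiting concavity of the logistic log-likelihood, rearranges this into an inequality of the schematic form ``restricted curvature in $\mathbf{h}$ $\le$ linear noise term driven by $\nabla\mathcal{L}^\beta(\boldsymbol{\omega})$ $+$ regularization slack.'' The two quantities I must then control separately are the $\ell_\infty$ norm of the gradient at the truth and a lower bound on the restricted curvature of the negative Hessian $\mathbf{H}(\cdot)\succeq 0$.

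First I would control the gradient. Each coordinate is a martingale transform $\sum_{i=1}^k \beta^{k-i}\sum_j (n-\lambda\Delta)\,x$, because $n-\lambda\Delta$ is a martingale difference given the spiking history and the covariates are predictable with respect to that filtration; consequently the strong covariate dependence flagged in assumption (1) is \emph{harmless} for this step. A Freedman/Bernstein-type inequality with predictable quadratic variation of order $W/(1-\beta)$ (the geometric weights summing to $\Theta(1/(1-\beta))$ precisely because $K \ge \log 2/\log(1/\beta)$ forces $\beta^K \le \tfrac12$), followed by a union bound over the $M$ coordinates, yields $\|\nabla\mathcal{L}^\beta(\boldsymbol{\omega})\|_\infty \lesssim \sqrt{W\log M/(1-\beta)}$ with probability $1-O(M^{-d})$. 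This fixes the admissible scale $\gamma \ge C''\sqrt{\log M/(1-\beta)}$ (with $W$ absorbed into $C''$), and the standard argument then confines $\mathbf{h}$ to the cone $\|\mathbf{h}_{S^c}\|_1 \le 3\|\mathbf{h}_S\|_1 + 4\sigma_L(\boldsymbol{\omega})$.

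The hard part will be establishing a restricted strong convexity (RSC) bound, where the covariate dependence genuinely bites. The Hessian is $-\sum_i \beta^{k-i}\sum_j \lambda\Delta(1-\lambda\Delta)\,\mathbf{x}\mathbf{x}'$; assumption (2) bounds the scalar factor $\lambda\Delta(1-\lambda\Delta)$ below by $p_{\min}(1-p_{\max})$, reducing matters to a restricted-eigenvalue lower bound on the weighted Gram matrix $\sum_i \beta^{k-i}\sum_j \mathbf{x}\mathbf{x}'$ over the cone. Unlike the gradient, this is a sum of \emph{dependent} rank-one terms that do not center as a martingale, so I would dissolve the dependence by interleaving the time indices into roughly $M$ subsequences spaced $M-1$ apart, within each of which the covariate vectors involve disjoint stimulus samples and are independent; concentrating each block (using $\sigma^2$ and the boundedness $B$) and recombining, with the geometric weights contributing the effective sample size $\Theta(W/(1-\beta))$, gives a restricted curvature of order $1/(1-\beta)$. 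Two technical subtleties arise here: (i) because the logistic log-likelihood is not quadratic, the remainder is evaluated along the segment joining $\boldsymbol{\omega}$ and $\widehat{\boldsymbol{\omega}}$, where the probabilities could leave $[p_{\min},p_{\max}]$, which I would tame via the covariate boundedness in a self-concordance/self-bounding argument confining the curvature loss to a constant factor; and (ii) the concentration must hold uniformly over the cone, handled by a peeling/discretization argument.

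Finally, combining the curvature lower bound $\kappa_{\mathrm{eff}}\|\mathbf{h}\|_2^2$ with $\kappa_{\mathrm{eff}} = \Theta(1/(1-\beta))$ against the noise and regularization upper bound $\lesssim \gamma\sqrt{L}\,\|\mathbf{h}\|_2 + \gamma\,\sigma_L(\boldsymbol{\omega})$ produces a quadratic inequality in $\|\mathbf{h}\|_2$. Solving $x^2 \lesssim (\gamma\sqrt{L}/\kappa_{\mathrm{eff}})\,x + \gamma\sigma_L/\kappa_{\mathrm{eff}}$ gives the two terms: the leading term $\gamma\sqrt{L}/\kappa_{\mathrm{eff}} = \sqrt{(1-\beta)L\log M}$ governs the sparse part, while the compressibility slack $\sigma_L(\boldsymbol{\omega})$ enters at first order and, after taking square roots (together with the $\sqrt{L}$ subspace-compatibility factor from the restricted geometry), contributes a term of order $\sqrt{\sigma_L(\boldsymbol{\omega})}\,\sqrt[4]{(1-\beta)L\log M}$, exactly as stated. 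The failure probabilities from the gradient and RSC steps add to the claimed $5/M^d$, and tracking $p_{\min},p_{\max},\sigma^2,B,W,d$ through the Bernstein and restricted-eigenvalue constants furnishes the explicit $C$, $C'$, and $C''$.
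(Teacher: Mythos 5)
Your proposal follows essentially the same route as the paper's proof: the Negahban-style restricted-strong-convexity framework, martingale (Azuma/Freedman-type) concentration for $\|\nabla\mathcal{L}^{\beta}(\boldsymbol{\omega})\|_\infty$ exploiting that the point-process innovations are martingale differences with respect to the stimulus/spiking filtration, and a restricted-eigenvalue bound for the exponentially weighted Toeplitz-structured Gram matrix obtained by decoupling the dependent covariates via partitioning into independent subsequences (the paper executes this entrywise with Haupt et al.'s technique plus Gershgorin's disc theorem and Bickel--Ritov--Tsybakov's Lemma 4.1, rather than your block-concentration-plus-peeling variant, but that is a difference of execution, not of approach). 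Your accounting of the effective sample size $N_\beta = \Theta(W/(1-\beta))$ via $K \ge \log 2/\log(1/\beta)$, the resulting curvature $\kappa_{\mathrm{eff}} = \Theta(1/(1-\beta))$, the final quadratic inequality yielding both error terms, and the $3/M^d + 2/M^d = 5/M^d$ failure-probability split all match the paper's argument.
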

\medskip

\begin{proof}
The proof is given in Appendix \ref{proof}.
\end{proof}
\medskip

\textbf{Remarks.} The result of Theorem \ref{thm:main} has four major implications. First, the error bound scales with $\sqrt{L}$, the sparsity level, as opposed to $M$, the ambient dimension of the parameter vector, which is consistent with results from CS, and results in the robustness of the estimate when the underlying parameter is sparse. Note that the bounds holds for general non-sparse  $\boldsymbol{\omega}$, but is sharpest when $\sigma_L(\boldsymbol{\omega})$ is negligible, i.e., the parameter vector is nearly $L$-sparse.

Second, the theorem prescribes a lower bound on the forgetting factor akin to the bounds obtained in CS theory for the total number of observations. For instance, the result of \cite{Toeplitz} for CS under Toeplitz sensing measurements for the linear model requires $T = \mathcal{O}(L^2 \log M)$ number of measurements to achieve a similar scaling of the error bound. In our case, the role of the number of measurements is transferred to forgetting factor by taking $\frac{1}{1-\beta}$ as the \emph{effective} length of the measurements. In the absence of the forgetting factor ($\beta = 1$), by a careful limiting process, our results require $T = \mathcal{O}(L^2 \log M)$ measurements. The latter case can be compared to the result of \cite{Negahban} for point process models with independent and identically distributed covariate vectors, which requires $\mathcal{O}(L \log M)$ for stability. The loss of $\mathcal{O}(L)$ is incurred due to the shift structure and hence high dependence of the covariate vectors in our case, as exemplified in assumption (1).

Third, the theorem reveals the scaling of the regularization parameter in terms of $M$ and $\beta$. In particular, this scaling is significant as it reveals another role for the forgetting factor mechanism: not only the forgetting factor mechanism allows for adaptivity of the estimates, it also controls the scaling of the $\ell_1$-regularization term with respect to the log-likelihood term.  Fourth, unlike conventional results in the analysis of adaptive filters which concern the expectation of the error in the asymptotic regime, our result holds for a single realization with probability polynomially approaching 1, in the non-asymptotic regime.

Note that the objective function is clearly concave, and assuming that the matrix of the covariate vectors is full-rank, will be strictly concave with a unique maximizer. However, the result of Theorem \ref{thm:main} does not require the uniqueness of the maximizer and holds for any maximizer of the objective function. In the next section, we will proceed with the development of recursive filters to track the maximizer of the objective function in the more general time-varying setting.

\vspace{-2mm}
\subsection{Algorithm Development}
Several standard optimization techniques, such as interior point methods, can be used to find the maximizer of (\ref{MainProb}). However, most of these techniques operate in batch mode and do not meet the real-time requirements of the adaptive filtering setting where the observations arrive in a streaming fashion. In order to avoid the increasing runtime complexity and memory requirements of the batch-mode computation, we seek a recursive approach which can perform low-complexity updates in an online fashion upon the arrival of new data in order to form the estimates. To this end, we adopt the proximal gradient approach. A version of the proximal gradient algorithm is given in Appendix \ref{proximal}. Each iteration of the algorithm moves the previous iterate along the gradient of the log-likelihood function, which will then pass through a shrinkage operator. 

Before describing further details, we introduce a more compact notation for convenience. Let $\mathbf{n}_k := [n_{(k-1)W+1},n_{(k-1)W+1},\dots,n_{kW} ]'$ denote the vector of observed spikes within window $k$, for $k = 1,2,\dots,K$. Similarly, let $\boldsymbol{\lambda}_k := \big[\lambda_{(k-1)W+1},\lambda_{(k-1)W+1},\dots,\lambda_{kW} \big]'$ denote the vector of CIFs within window $k$. By extending the domain of the $\operatorname{logit}^{-1} (\cdot)$ to vectors in a component-wise fashion, we can express $\boldsymbol{\lambda}_k \Delta$ as:
\begin{align}
 \boldsymbol{\lambda}_k \Delta = \operatorname{logit}^{-1} \Big( \mathbf{X}_k \boldsymbol{\omega}_k \Big)
\end{align}
\vspace{-2mm}

\noindent where $\mathbf{X}_k := \big[\mathbf{x}_{(k-1)W+1},\mathbf{x}_{(k-1)W+2},\dots,\mathbf{x}_{kW} \big]'$ is the data matrix of size $W \times M$ with rows corresponding to the covariate vectors in window $k$.
Suppose that at window $k$, we have an iterate denoted by $\widehat{\boldsymbol{\omega}}_k^{(\ell)}$, for $\ell=0,1,\cdots, R$, with $R$ being an integer denoting the total number of iterations. The gradient of $\mathcal{L}^{\beta}(\cdot)$ evaluated at $\widehat{\boldsymbol{\omega}}_k^{(\ell)}$ can be written as:
\vspace{-1mm}
\begin{align}
 \nabla_{\boldsymbol{\omega}} {\mathcal{L}}^{\beta}\left(\widehat{\boldsymbol{\omega}}_k^{(\ell)}\right) &= \sum_{i = 1}^k \beta^{k-i} \mathbf{X}_i' \boldsymbol{\varepsilon}_i\left(\widehat{\boldsymbol{\omega}}_k^{(\ell)}\right) =: \mathbf{g}_k \left (\widehat{\boldsymbol{\omega}}_k^{(\ell)}\right ) \label{gradSerie} 
\end{align}
\vspace{-2mm}

\noindent where $\boldsymbol{\varepsilon}_i(\cdot) := \mathbf{n}_i - \boldsymbol{\lambda}_i(\cdot)\Delta$ represents the innovation vector of the point process at window $i$. The innovation vector $\boldsymbol{\varepsilon}_i$ can be thought of as the counterpart of the conventional innovation vector in adaptive filtering of linear models. The proximal gradient iteration for the $\ell_1$-regularization can be written in the compact form as:
\begin{align}
\widehat{\boldsymbol{\omega}}_k^{(\ell+1)} &= \mathcal{S}_{\gamma \alpha} \Big(\widehat{\boldsymbol{\omega}}_k^{(\ell)} + \alpha \mathbf{g}_k\left(\widehat{\boldsymbol{\omega}}_k^{(\ell)}\right) \Big)
\label{GenItr}
\end{align} 
where $\mathcal{S}_{\tau}(\cdot)$ is the element-wise soft thresholding operator at a level of $\tau$ given in Appendix \ref{proximal}. The final estimate at window $k$ is obtained following the $R$th iteration, and is denoted by $\widehat{\boldsymbol{\omega}}_k := \widehat{\boldsymbol{\omega}}_k^{(R)}$. In order to achieve a recursive updating rule for $\mathbf{g}_k$, we can rewrite Eq. (\ref{gradSerie}) as:
\begin{align}\label{eq:rec}
\mathbf{g}_k\left(\widehat{\boldsymbol{\omega}}_k^{(\ell)}\right) &= \beta \, \mathbf{g}_{k-1}\left(\widehat{\boldsymbol{\omega}}_k^{(\ell)}\right) + \mathbf{X}_k' \boldsymbol{\varepsilon}_k\left(\widehat{\boldsymbol{\omega}}_k^{(\ell)} \right)
\end{align} 
However, in an adaptive setting, we only have access to values of $\mathbf{g}_{k-1}$ evaluated at $\boldsymbol{\widehat{\omega}}^{(1:L)}_{k-1}(\cdot)$! In order to turn Eq. (\ref{eq:rec}) into a fully recursive updating rule, all the previous CIF vectors $\{ \boldsymbol{\lambda}_i(\cdot)\}_{i=1}^{k-1}$ should be recalculated at the most recent set of iterates $\boldsymbol{\widehat{\omega}}^{(1:L)}_{k}(\cdot)$. In order to overcome this computational burden, we exploit the smoothness of the logistic function and employ the Taylor series expansion of the CIF to approximate the required recursive update. In what follows, we consider the zeroth order and first order expansions, which result in two distinct, yet fully recursive, updating rules for Eq. (\ref{eq:rec}).
\medskip

\noindent \textit{\textbf{Zeroth Order Expansion:}} By retaining only the first term in the Taylor series expansion of the CIF $\boldsymbol{\lambda}_i\left(\widehat{\boldsymbol{\omega}}_k^{(\ell)}\right)$ around $\widehat{\boldsymbol{\omega}}_i$, we get:
\vspace{-2mm}
\begin{equation}\label{eq:0}
\boldsymbol{\lambda}_i\left(\widehat{\boldsymbol{\omega}}_k^{(\ell)}\right) \Delta \approx \boldsymbol{\lambda}_i\left(\widehat{\boldsymbol{\omega}}_i\right) \Delta
\end{equation})

\noindent where $\boldsymbol{\lambda_i}\Delta = \operatorname{logit}^{-1}(\mathbf{X}_i \widehat{\boldsymbol{\omega}}_i) $.  
Substituting this approximation in Eq. (\ref{gradSerie}), we can express the zeroth order approximation to the gradient at window $k$, denoted by $\mathbf{g}^{0}_k(\cdot)$, as:
\vspace{-2mm}
\begin{align}
\mathbf{g}^{0}_k\left(\widehat{\boldsymbol{\omega}}_k^{(\ell)}\right)  = \sum_{i = 1}^k \beta^{k-i} \, \mathbf{X}_i' \boldsymbol{\varepsilon}_i (\widehat{\boldsymbol{\omega}}_i)
\end{align}

\noindent It is then straightforward to obtain a recursive form as:
\begin{align}
\mathbf{g}^{0}_k \left(\widehat{\boldsymbol{\omega}}_k^{(\ell)}\right)&=  \beta \ \mathbf{g}^{0}_{k-1}\left(\widehat{\boldsymbol{\omega}}_k^{(\ell)}\right) + \mathbf{X}_k' \boldsymbol{\varepsilon}_k\left(\widehat{\boldsymbol{\omega}}_k^{(\ell)}\right)
\end{align}

\begin{figure}[b!]
\vspace{-6mm}
\noindent \begin{minipage}{\columnwidth}
\begin{algorithm}[H]
\caption{  {\small $\ell_1$-regularized Point Process Filter of the Zeroth Order} ($\ell_1$-PPF$_0$) }
\label{alg0}
\begin{algorithmic}[1]
\REQUIRE {$\mathbf{n}_k$, $\mathbf{X}_k$, $\mathbf{g}_{k-1}$, $\widehat{\boldsymbol{\omega}}_k^{(0)}$, and $R$ }.
\FOR{$\ell = 0, \dots, R-1$}
\STATE $\boldsymbol{\lambda}_k \Delta =  \operatorname{logit}^{-1} \left( \mathbf{X}_k \widehat{\boldsymbol{\omega}}_k^{(\ell)} \right)$
\STATE $\boldsymbol{\varepsilon}_k = \mathbf{n}_k - \boldsymbol{\lambda}_k \Delta$
\STATE $\mathbf{g}_k = \beta \, \mathbf{g}_{k-1} + \mathbf{X}_k' \boldsymbol{\varepsilon}_k$
\STATE $\widehat{\boldsymbol{\omega}}_k^{(\ell+1)} = \mathcal{S}_{\gamma \alpha} \Big[ \widehat{\boldsymbol{\omega}}_k^{(\ell)} +\alpha \mathbf{g}_k \Big] $
\ENDFOR
\ENSURE $\widehat{\boldsymbol{\omega}}_k := \widehat{\boldsymbol{\omega}}_k^{(R)}$.
\end{algorithmic}
\end{algorithm}
\end{minipage}
\end{figure}
\medskip

\noindent The shrinkage step will be then given by:
\begin{align}
\widehat{\boldsymbol{\omega}}_k^{(\ell+1)} &= \mathcal{S}_{\gamma \alpha} \Big(\widehat{\boldsymbol{\omega}}_k^{(\ell)} + \alpha \mathbf{g}^0_k\left(\widehat{\boldsymbol{\omega}}_k^{(\ell)}\right) \Big)
\end{align}

\noindent We refer to the resulting filter as the $\ell_1$-regularized Point Process Filter of the Zeroth Order ($\ell_1$-PPF$_0$). A pseudo-code is given in Algorithm \ref{alg0}.

\noindent \textit{\textbf{First Order Expansion:}} If instead, we retain the first two terms in the Taylor expansion, Eq. (\ref{eq:0}) will be replaced by:
\begin{equation}
\boldsymbol{\lambda}_i\left(\widehat{\boldsymbol{\omega}}_k^{(\ell)}\right) \Delta \approx \boldsymbol{\lambda}_i\left(\widehat{\boldsymbol{\omega}}_i\right) \Delta + \boldsymbol{\Lambda}_i \left ( \widehat{\boldsymbol{\omega}}_i \right ) \mathbf{X}_i \left ( \widehat{\boldsymbol{\omega}}^{(\ell)}_k - \widehat{\boldsymbol{\omega}}_i \right ) 
\end{equation}
where $\boldsymbol{\Lambda}_i(\widehat{\boldsymbol{\omega}}_i)$ is a diagonal $W \times W$ matrix with the $(m,m)$th diagonal element given by $\lambda_{(i-1)W+m} \Delta ( 1 - \lambda_{(i-1)W+m} \Delta)$. Using the first order approximation above, we can improve the resulting approximation to the gradient, denoted by $\mathbf{g}^{1}_k$, as:
\vspace{-2mm}
\begin{align}
\nonumber \mathbf{g}^{1}_k  \left(\widehat{\boldsymbol{\omega}}_k^{(\ell)}\right)= \sum_{i = 1}^k \beta^{k-i} \, \mathbf{X}_i' \Big(\boldsymbol{\varepsilon}_i(\widehat{\boldsymbol{\omega}}_i) -  \boldsymbol{\Lambda}_i(\widehat{\boldsymbol{\omega}}_i) \mathbf{X}_i \big(\widehat{\boldsymbol{\omega}}^{(\ell)}_k - \widehat{\boldsymbol{\omega}}_i\big) \Big)
\end{align}
By defining:
\vspace{-3mm}
\begin{align}\label{eq:hess}
\mathbf{u}_k^{(\ell)}&:= \sum_{i = 1}^k \beta^{k-i} \, \mathbf{X}_i' \Big(\boldsymbol{\varepsilon}_i (\widehat{\boldsymbol{\omega}}_i)+ \boldsymbol{\Lambda}_i(\widehat{\boldsymbol{\omega}}_i) \mathbf{X}_i \widehat{\boldsymbol{\omega}}_i \Big) \\
\mathbf{B}_k^{(\ell)} &:= \sum_{i = 1}^k \beta^{k-i} \, \mathbf{X}_i' \boldsymbol{\Lambda}_i(\widehat{\boldsymbol{\omega}}_i) \mathbf{X}_i,
\end{align}
\vspace{-3mm}

\noindent we can express $\mathbf{g}^{1}_k  \left(\widehat{\boldsymbol{\omega}}_k^{(\ell)}\right)$ as:
\begin{align}
\mathbf{g}^{1}_k  \left(\widehat{\boldsymbol{\omega}}_k^{(\ell)}\right)= \mathbf{u}^{(\ell)}_k - \mathbf{B}^{(\ell)}_k\widehat{\boldsymbol{\omega}}_k^{(\ell)}.
\end{align}

\begin{figure}[b!]
\vspace{-4mm}
\noindent \begin{minipage}{\columnwidth}
\begin{algorithm}[H]
\caption{  {\small $\ell_1$-regularized Point Process Filter of the First Order} ($\ell_1$-PPF$_1$) }
\label{alg1}
\begin{algorithmic}[1]
\REQUIRE {$\mathbf{n}_k$, $\mathbf{X}_k$, $\mathbf{u}_{k-1}$, $\mathbf{B}_{k-1}$, $\widehat{\boldsymbol{\omega}}_k^{(0)}$, and $R$ }.
\FOR{$\ell = 0, \dots, R-1$}
\STATE $\boldsymbol{\lambda}_k \Delta =  \operatorname{logit}^{-1} \left( \mathbf{X}_k \widehat{\boldsymbol{\omega}}_k^{(\ell)} \right)$
\STATE $\boldsymbol{\varepsilon}_k = \mathbf{n}_k - \boldsymbol{\lambda}_k \Delta$
\STATE $(\boldsymbol{\Lambda}_k)_{m,m} = (\boldsymbol{\lambda}_k)_m \Delta \left( 1- (\boldsymbol{\lambda}_k)_m \Delta \right)$, $m=1,\cdots,W$
\STATE $\mathbf{u}_k = \beta \, \mathbf{u}_{k-1} + \mathbf{X}_k' \left(\boldsymbol{\varepsilon}_k + \boldsymbol{\Lambda}_k \mathbf{X}_k \widehat{\boldsymbol{\omega}}_k^{(\ell)}  \right)$
\STATE $\mathbf{B}_k = \beta \, \mathbf{B}_{k-1} + \mathbf{X}_k' \boldsymbol{\Lambda}_k \mathbf{X}_k $
\STATE $\mathbf{g}_k = \mathbf{u}_k - \mathbf{B}_k \widehat{\boldsymbol{\omega}}_k^{(\ell)} $
\STATE $\widehat{\boldsymbol{\omega}}_k^{(\ell+1)} = \mathcal{S}_{\gamma \alpha} \Big[ \widehat{\boldsymbol{\omega}}_k^{(\ell)} +\alpha \mathbf{g}_k \Big] $
\ENDFOR
\ENSURE $\widehat{\boldsymbol{\omega}}_k := \widehat{\boldsymbol{\omega}}_k^{(R)}$.
\end{algorithmic}
\end{algorithm}
\end{minipage}
\end{figure}

It is then straightforward to check that both $\mathbf{u}_k$ and $\mathbf{B}_k$ can be updated recursively \cite{babadi2010sparls} as:
\begin{align}
\nonumber \mathbf{u}^{(\ell)}_k &= \beta \, \mathbf{u}^{(R)}_{k-1} + \mathbf{X}_k' \Big(\boldsymbol{\varepsilon}_k\left(\widehat{\boldsymbol{\omega}}_k^{(\ell)}\right) + \boldsymbol{\Lambda}_k\left(\widehat{\boldsymbol{\omega}}_k^{(\ell)}\right) \mathbf{X}_k \widehat{\boldsymbol{\omega}}_k^{(\ell)} \Big)  \\
\nonumber \mathbf{B}^{(\ell)}_k &= \beta \, \mathbf{B}^{(R)}_{k-1} + \mathbf{X}_k' \boldsymbol{\Lambda}_k\left(\widehat{\boldsymbol{\omega}}_k^{(\ell)}\right) \mathbf{X}_k 
\end{align}
Note that the update rules for both $\mathbf{B}^{(\ell)}_k$ and $\mathbf{u}^{(\ell)}_k$ involve simple rank-$W$ operations. The shrinkage step is then given by:
\vspace{-4mm}
\begin{align}
\widehat{\boldsymbol{\omega}}_k^{(\ell+1)} &= \mathcal{S}_{\gamma \alpha} \Big(\left( \mathbf{I} - \alpha \mathbf{B}_k^{(\ell)}\right ) \widehat{\boldsymbol{\omega}}_k^{(\ell)} + \alpha \mathbf{u}_k^{(\ell)} \Big)
\end{align}
We refer to the resulting filter as the $\ell_1$-regularized Point Process Filter of the First Order ($\ell_1$-PPF$_1$). A pseudo-code is given in Algorithm \ref{alg1}.
\medskip

\noindent \textbf{Remark}. The computational complexity of $\ell_1$-PPF$_0$ and $\ell_1$-PPF$_1$ algorithms can be shown to be linear and quadratic in $M$ per iteration, respectively. Our results in Section \ref{sec:applications} will reveal that both filters outperform existing filters of the same complexity, respectively. Furthermore, $\ell_1$-PPF$_1$ exhibits superior performance over $\ell_1$-PPF$_0$ as expected, although with a cost of $\mathcal{O}(M)$ in computational complexity per iteration.

\subsection{Constructing Confidence Intervals}

Characterizing the statistical confidence bounds for the estimates is of utmost importance in neural data analysis, as it allows to test the validity of various hypotheses. Although construction of confidence bounds for linear models in the absence of regularization is well understood and widely applied, regularized ML estimates are usually deemed as point estimates for which the construction of statistical confidence regions is not straightforward. A series of remarkable results in high-dimensional statistics \cite{javanmard2014confidence, van2014asymptotically, zhang2014confidence} have recently addressed this issue by providing techniques to construct confidence intervals for $\ell_1$-regularized ML estimates of linear models as well as GLMs. These approaches are based on a careful inspection of the Karush-Kuhn-Tucker (KKT) conditions for the regularized estimates, which admits a procedure to decompose the estimates into a bias term plus an asymptotically Gaussian term (referred to as 'de-sparsifying' in \cite{van2014asymptotically}), which can be computed using a nodewise regression \cite{meinshausen2006high} of the covariates.
\begin{figure}[b!]
\vspace{-4mm}
\noindent \begin{minipage}{\columnwidth}
\begin{algorithm}[H]
 \caption[]{Recursive Construction of the Confidence Regions for the $m$th Component of $\widehat{\mathbf{w}}_k$.\footnote{For a matrix $\mathbf{A} \in \mathbb{R}^{M \times M}$, we denote by $(\mathbf{A})_{m,\!\backslash m}$ the $m$th row with the $m$th element removed, and by $(\mathbf{A})_{\backslash m,\!\backslash m}$ the submatrix of $\mathbf{A}$ with both the $m$th row and column removed.}}
\label{alg3}
\begin{algorithmic}[1]
\REQUIRE {$\mathbf{n}_k$, $\mathbf{X}_k$, $\mathbf{u}_k$, $\mathbf{B}_k$, and $\widehat{\boldsymbol{\omega}}_k$, $\mathbf{G}_{k-1}$, $m$, $\gamma_m$, and $\widehat{\boldsymbol{\psi}}_m^{(0)}$. }
\STATE $\mathbf{g}_k = \mathbf{u}_k - \mathbf{B}_k \widehat{\boldsymbol{\omega}}_k  $
\STATE $\mathbf{G}_k = \beta^2 \mathbf{G}_{k-1} + \mathbf{X}_k' \boldsymbol{\varepsilon}_k \boldsymbol{\varepsilon}_k' \mathbf{X}_k $
\FOR{$\ell = 0, \dots, R-1$}
\STATE {$\widehat{\boldsymbol{\psi}}_{m}^{(\ell+1)} \!=\! \mathcal{S}_{\gamma_m \alpha} \Big[ \widehat{\boldsymbol{\psi}}_{m}^{(\ell)} \!- \alpha \Big((\mathbf{B}_{k})_{m,\!\backslash m} \!- (\mathbf{B}_{k})_{\backslash m, \!\backslash m} \widehat{\boldsymbol{\psi}}_{m}^{(\ell)} \Big)  \Big] $}
\ENDFOR
\STATE ${\tau}^2_{m} = (\mathbf{B}_k)_{m,m} - \widehat{\boldsymbol{\psi}}_{m}^{(R)} (\mathbf{B}_{k})'_{m,\!\backslash m} $
\STATE $(\mathbf{c})_m =1$, and $(\mathbf{c})_{\backslash m} = - \widehat{\boldsymbol{\psi}}_{m}^{(R)}$
\STATE $ (\widehat{\boldsymbol{\Theta}}_{k})_m = \frac{1}{{\tau}^2_{m}} \mathbf{c}$
\STATE $\widehat{\sigma}_{k,m}^2 := (\widehat{\boldsymbol{\Theta}}_{k})_m \mathbf{G}_k (\widehat{\boldsymbol{\Theta}}_{k})_m'$
\STATE $(\widehat{\mathbf{w}}_{k})_m = (\widehat{\boldsymbol{\omega}}_{k})_m - (\widehat{\boldsymbol{\Theta}}_{k})_m \mathbf{g}_k$
\ENSURE $ \mathcal{CR}_{k,m} := [ (\widehat{\mathbf{w}}_{k})_m \pm \Phi^{-1}(1-\alpha/2) \, \widehat{\sigma}_{k,m}]$
\end{algorithmic}
\end{algorithm}
\end{minipage}
\end{figure}

In what follows, we give a brief description of how the methods of \cite{van2014asymptotically} apply to our setting, and leave the details to Appendix \ref{app:conf}. Using the result of \cite{van2014asymptotically}, the estimate $\boldsymbol{\omega}_k$ as the maximizer of (\ref{MainProb}) can be decomposed as:
\begin{equation}\label{eq:decomp}
\widehat{\boldsymbol{\omega}}_k =  \widehat{\boldsymbol{\Theta}}_k \mathbf{g}_k (\widehat{\boldsymbol{\omega}}_k) + \widehat{\mathbf{w}}_k
\end{equation}
where $\widehat{\boldsymbol{\Theta}}_k$ is an approximate inverse to the Hessian of $\mathcal{L}^{\beta}(\boldsymbol{\omega})$ evaluated at $\widehat{\boldsymbol{\omega}}_k$, $\mathbf{g}_k$ is the gradient of $\mathcal{L}^{\beta}(\boldsymbol{\omega})$ previously defined in Eq. (\ref{gradSerie}), and $\widehat{\mathbf{w}}_k$ is an unbiased and asymptotically Gaussian random vector with a covariance matrix of ${\sf cov}(\widehat{\mathbf{w}}_k) = \widehat{\boldsymbol{\Theta}}_k \mathbf{G}_k(\widehat{\boldsymbol{\omega}}_k) \widehat{\boldsymbol{\Theta}}'_k$, with 
\begin{equation}
\mathbf{G}_k(\widehat{\boldsymbol{\omega}}_k) := \sum_{i=1}^k \beta^{2(k-i)} \mathbf{X}_i' \boldsymbol{\varepsilon}_i(\widehat{\boldsymbol{\omega}}_k) \boldsymbol{\varepsilon}_i(\widehat{\boldsymbol{\omega}}_k)' \mathbf{X}_i.
\end{equation}
The first term in Eq. (\ref{eq:decomp}) is a bias term which can be directly computed given $\widehat{\boldsymbol{\Theta}}_k$. Given ${\sf cov}(\widehat{\mathbf{w}}_k)$, statistical confidence bounds for the second term at desired levels can be constructed in a standard way. The main technical issue in the aforementioned procedure in our setting is the computation of $\widehat{\boldsymbol{\Theta}}_k$ in a recursive fashion. Since the rows of $\widehat{\boldsymbol{\Theta}}_k$ are computed using $\ell_1$-regularized least squares, we use the SPARLS algorithm \cite{babadi2010sparls} as an efficient method to carry out the computation in a recursive fashion. Algorithm \ref{alg3} summarized the recursive computation of confidence intervals for the $m$th component of $\widehat{\mathbf{w}}_k$.

\vspace{-3mm}
\section{Applications}\label{sec:applications}
In this section, we will apply the proposed algorithms to simulated data as well as real spiking data from the ferret primary auditory cortex. In our simulation studies, we compare the performance of our proposed filters with two of the state-of-the-art point process filters, namely the steepest descent point process filter (SDPPF) \cite{brown2001analysis} and the stochastic state point process filter (SSPPF) \cite{eden2004dynamic}. These adaptive filters are based on approximate solutions to the  Chapman-Kolmogorov forward equation obtained by a steepest descent and a Gaussian approximation procedure, respectively.

\vspace{-3mm}
\subsection{Simulation Study 1: MSE and Sparse Recovery Learning Curves}

First, we consider a stationary environment where $\boldsymbol{\omega}$ is constant over time. We use a bin size of $\Delta = 1~ms$ and window size of $W = 1$ sample, for a total observation window of $\mathcal{T} = 30~sec$ ($K= 30000$). The length of the parameter vector $\boldsymbol{\omega} = [\mu, \boldsymbol{\theta}]$ is chosen as $M=101$. For each realization, we draw a sparse parameter vector $\boldsymbol{\theta}$ of fixed length $M -1 = 100$ and sparsity $L = 3$. The support $S$ and values of the nonzero components of $\boldsymbol{\theta}$ are chosen randomly and the values are normalized so that $\| \boldsymbol{\theta} \|_2 = 10$. The stimulus input sequence $\{s_{k}\}_{k=-M+1}^K$ is drawn from an i.i.d. Gaussian distribution $\mathcal{N}(0,\sigma^2)$. The binary spike train $\{n_k \}_{k=1}^K$ is generated as a single realization of conditionally independent Bernoulli trials with success rate $\lambda_k \Delta$. The stimulus variance $\sigma^2$ is chosen as $\sigma^2 = 0.01$ small enough so that the average spiking rate $\lambda_{avg} \Delta = 0.13 \ll 1$ to ensure that the Bernoulli approximation is valid.  All the simulations are done with $R = 1$ iteration per time step. The step size $\alpha$ is chosen as $\alpha \simeq 9\times 10^{-4}$ (See Appendix \ref{proximal} for details). 

For a given pair of $(\alpha,\beta)$ parameters, we select an optimal value for the regularization parameter $\gamma$ by performing a two-fold even-odd cross validation procedure: first, the data are split into two sets of even and odd samples in an interleaved manner. Then, one set is used as the training set for estimation of the parameter vector $\boldsymbol{\omega}_k$ and the other is used to assess the goodness-of-fit of the estimates $\widehat{\boldsymbol{\omega}}_k$ with respect to the log-likelihood of the observations. We repeat the process switching the role of the two sets and take the average as the overall measure of fit. 

Let $\widehat{\mathbb{E}}$ denote the averaging operator with respect to realizations. We consider two performance metrics: the normalized mean squared error (MSE) defined as $\operatorname{MSE}_k := \widehat{\mathbb{E}} \|\widehat{\boldsymbol{\omega}}_k - \boldsymbol{\omega}_k \|^2/ \widehat{\mathbb{E}} \| \boldsymbol{\omega}_k \|^2 $ to evaluate MSE performance at time step $k$; and the out-of-support energy defined as $\operatorname{SPM}_k := \widehat{\mathbb{E}} \| \widehat{\boldsymbol{\theta}}_k - (\widehat{\boldsymbol{\theta}}_k)_S \|^2 / \mathbb{E} \| \widehat{\boldsymbol{\theta}}_k \|^2$ to represent a sparsity metric (SPM). Ideally, $\operatorname{SPM}_k$ must be equal to zero at all times. The averaging is carried out over a sufficiently large number of runs.

Figure \ref{fig:f1} shows the corresponding learning curves for the four algorithms.  According to Figure \ref{fig:f1}--A,  the $\ell_1$-PPF$_1$ achieves the lowest stationary MSE measure of $-10.4$ dB, followed $\ell_1$-PPF$_0$ which achieves an MSE of $-9.2$~dB. The SSPPF and SDPPF algorithms respectively achieve an MSE of $-3.35$ dB and $-1.9$ dB, which reveals a gap of $\approx 7$~dB with respect to our proposed filters. Note that this result is consistent with the prediction of Theorem \ref{thm:main} and highlights the MSE gain achieved by $\ell_1$-regularization, as opposed to ML, when the underlying parameter is sparse.

\begin{figure}
\centering
\includegraphics[width=0.85\columnwidth]{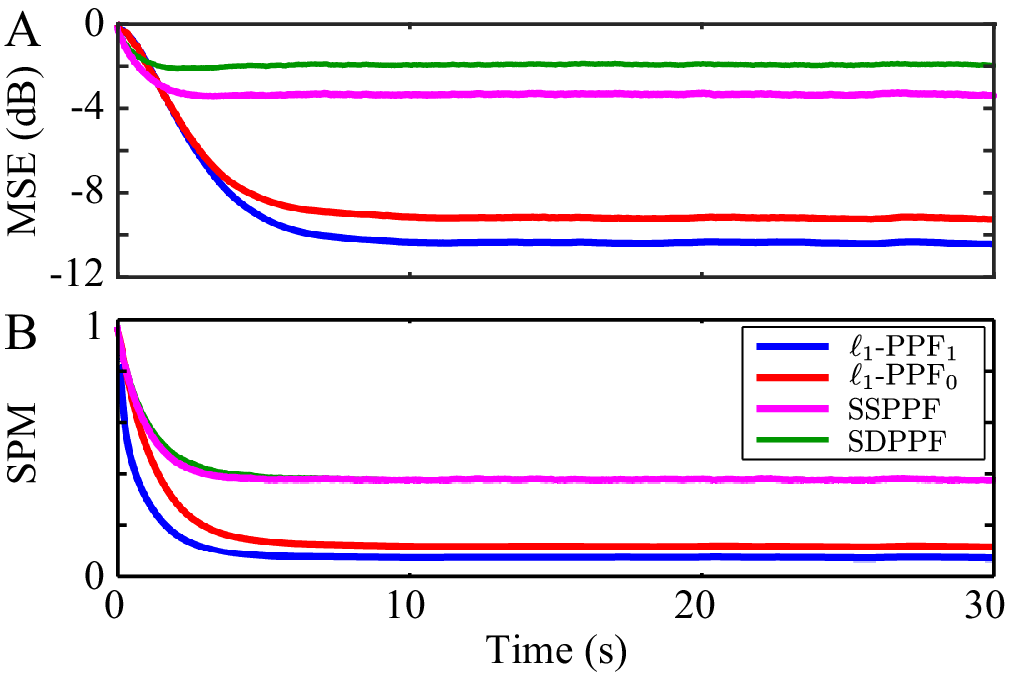}
\vspace{-2mm}
\caption{Learning curves of the adaptive filtering algorithms in a stationary environment. A) MSE vs. time, B) SPM vs. time.}
\label{fig:f1}
\vspace{-4mm}
\end{figure}

\vspace{-3.5mm}
\subsection{Simulation Study 2: Tracking and Goodness-of-fit Performance}

\begin{figure*}
\centering
\includegraphics[width= 0.8 \textwidth]{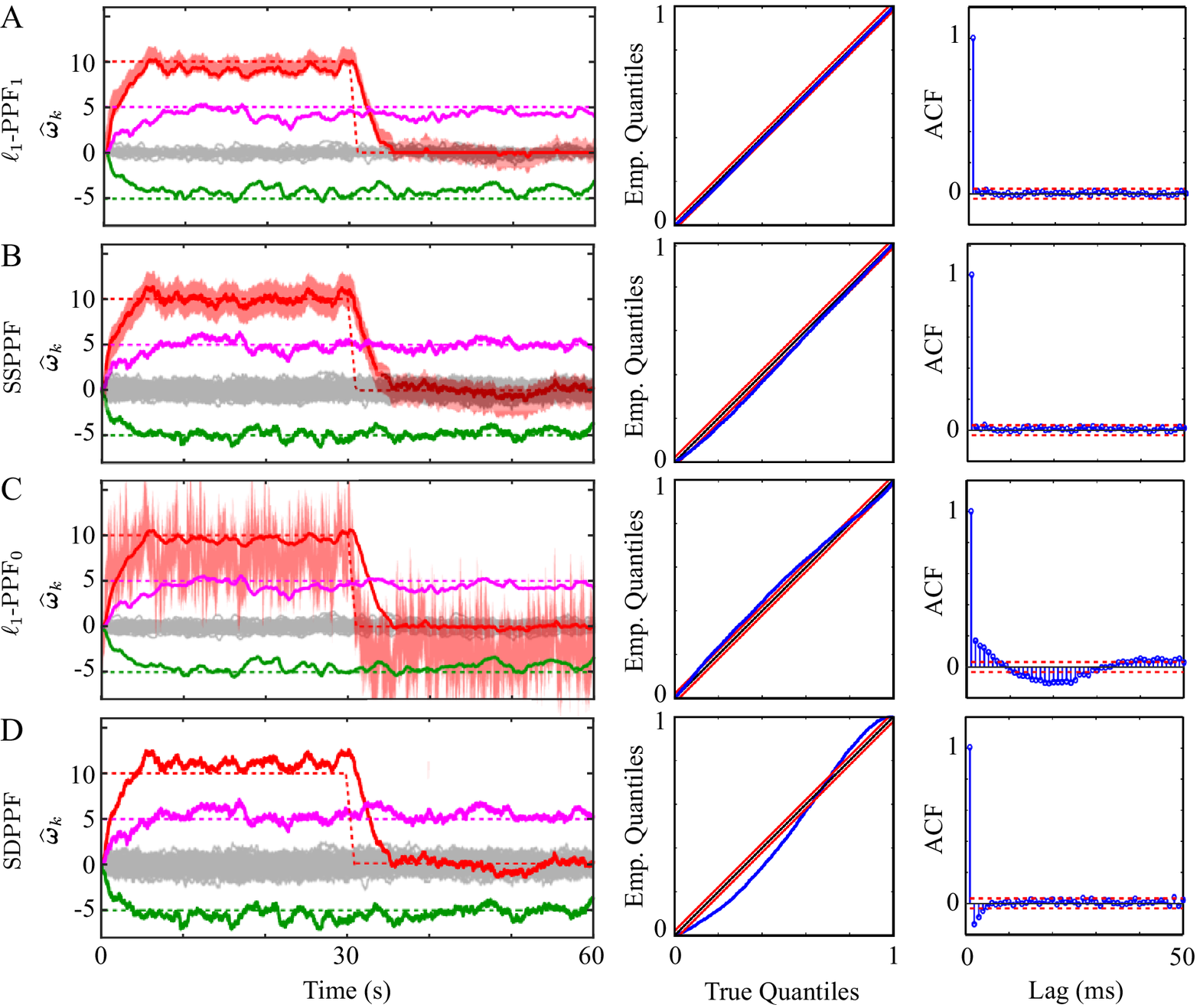}
\vspace{-2mm}
\caption{Performance comparison of the adaptive filtering algorithms: A) $\ell_1$-PPF$_1$, B) SSPPF, C) $\ell_1$-PPF$_0$, and D) SDPPF. In each row, the left panel shows the trie parameter vector with dashed traces and the estimates with solid traces. Colored hulls show the $95\%$ confidence intervals for one of the components. The middle and right panels show the corresponding KS and ACF test plots, respectively. Red traces show confidence regions at a level of $95\%$ for both tests.}
\label{fig:f3}
\vspace{-4mm}
\end{figure*}

In the second simulation scenario, we consider a more realistic setting where $\boldsymbol{\omega}_k$ evolves in time. Furthermore, as in the case of real data applications, we assume that the support of $\boldsymbol{\omega}_k$ is not available as a performance benchmark and resort to statistical goodness-of-fit test. These tests for point process models have been developed as an application of the time-rescaling theorem \cite{brown2002time, haslinger2010discrete} and consist of the Kolmogorov-Smirnov (KS) test for assessing the conditional intensity estimation accuracy, and the Autocovariance Function (ACF) test to assess the conditional independence assumption. We skip the details in the interest of space, and refer the readers to the aforementioned references for a detailed treatment. 

As in the previous case, we consider a bin size of $\Delta = 1 ms$, window size of $W=1$, and a total observation window of $\mathcal{T} = 60 sec$ ($K = 60000$ bins). The stimulus is generated as in the previous case. For the parameter vector $\boldsymbol{\omega}_k$, we choose a fixed baseline rate of $\mu_k = -2.51$ to set the baseline spiking rate to $\lambda_{avg} \Delta \approx 0.1$, and select a sparse modulation vector $\boldsymbol{\theta}_k$ of length $M = 100$ with a support $S = \{1, 10, 20 \}$ of size $L=3$, and respective values of $(\boldsymbol{\theta}_k)_{\{1,10,20\}} = \{10, -5, 5\}$ for $k \le K/2$. Halfway through the test, at $k = K/2+1$, the largest component $(\boldsymbol{\theta}_k)_{1}$, drops  rapidly and linearly to $0$, within a window of length $1~sec$ and remains zero for the rest of the run.

Figure \ref{fig:f3} shows the performance of all four algorithms in the aforementioned setting. Each row (A through D) shows the true time-varying parameter vector (dashed traces) as well as the filtered estimates (solid traces) in the left panel. In particular, the gray solid traces show the out-of-support components which must ideally be equal to zero. The colored hulls around $(\widehat{\boldsymbol{\theta}}_k)_1$ show the $95\%$ confidence intervals (note that confidence intervals for SDPPF cannot be directly obtained and require averaging over multiple realizations). The middle and right panels show the KS and ACF test results at a $95\%$ confidence, respectively. For the quadratic algorithms $\ell_1$-PPF$_1$ and SSPPF, a forgetting factor of $\beta = 0.9995$ is chosen. The regularization parameter for $\ell_1$-PPF$_1$ is chosen as $\gamma = 0.5$, obtained by the aforementioned two-fold even-odd cross validation. For the first order algorithm $\ell_1$-PPF$_0$, a smaller forgetting factor of $\beta = 0.995$ is chosen to ensure stability, and a value of $\gamma = 0.1$ is used based on cross validation. These settings ensure that all the algorithms are tuned in their optimal operating point for fairness of comparison.

Figure \ref{fig:f3}--A and \ref{fig:f3}--B reveal three striking performance gaps between the two second-order algorithms (with the same computational complexity, quadratic in $M$): first, the out-of-support components (gray traces) of $\ell_1$-PPF$_1$ are significantly smaller than those of SSPPF; second, the confidence regions of $\ell_1$-PPF$_1$ are narrower than those of SSPPF; and third, $\ell_1$-PPF$_1$ fully passes the KS test, while SSPPF marginally does so. Similarly, comparing the two first order algorithms (with the same computational complexity, linear in $M$) Figure \ref{fig:f3}--C and \ref{fig:f3}--D reveal that the $\ell_1$-PPF$_0$ significantly suppresses the out-of-support components as compared to SDPPF. Moreover, $\ell_1$-PPF$_0$ provides confidence bounds, which cannot be directly obtained for SDPPF. Finally, $\ell_1$-PPF$_0$ marginally fails the KS test, whereas SDPPF does so significantly. Both algorithms fail the ACF test, which shows that the second-order corrections embedded in $\ell_1$-PPF$_1$ and SSPPF is necessary to achieve a better goodness-of-fit, which a price of higher computational complexity. 

We also inspect the estimated firing probability $\lambda_k(\widehat{\boldsymbol{\omega}}_k) \Delta$ for the four algorithms in Figure \ref{fig:f4}. In addition, we include the probability estimated by the normalized reverse correlation (NRC) method, which is commonly used in neural data analysis, and fits the modulation parameters using a linear model. Figure \ref{fig:f4} shows the true spiking probability (blue solid trace) and the resulting spikes (black vertical lines). In the subsequent rows (B through F), the true and estimated probabilities are shown by dashed blue and solid red traces, respectively. A comparison of all the rows reveals that $\ell_1$-PPF$_1$ and $\ell_1$-PPF$_0$ outperform SSPPF and SDPPF, respectively, in terms of estimating the true probability. The NRC method is inferior to the preceding four algorithms, and results in negative estimates of the probability due to its use of a linear model (as opposed to logistic).

\begin{figure}
\centering
\includegraphics[width = .98\columnwidth]{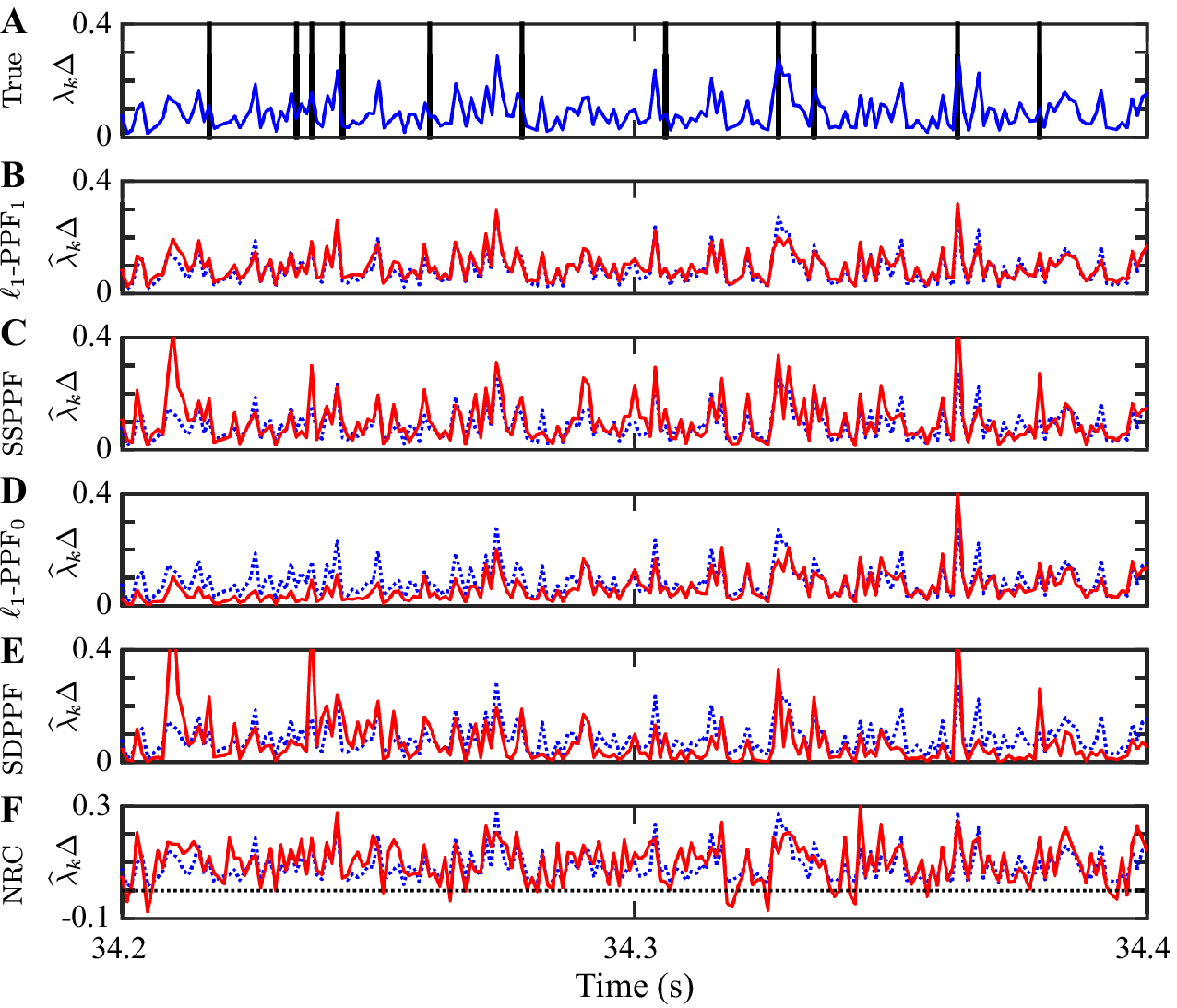}
\vspace{-3mm}
\caption{Firing rate estimates for adaptive filtering algorithms within an interval of $34.2~s \le t \le 34.4~s$: A) true rate (blue solid trace) and spikes (black vertical lines), B) $\ell_1$-PPF$_1$, C) SSPPF, D) $\ell_1$-PPF$_0$, E) SDPPF, and F) normalized reverse correlation (NRC). In rows B through F, the dashed blue traces and solid red traces show the true rate and the estimated rate, respectively.}
\label{fig:f4}
\vspace{-2mm}
\end{figure}

\vspace{-3.5mm}
\subsection{Application to Real Data: Dynamic Analysis of Spectrotemporal Receptive Field Plasticity}

The responses of neurons in the primary auditory cortex (A1) can be characterized by their spectrotemporal receptive fields (STRFs), where each neuron is tuned to a specific region in the time-frequency plane, and only significantly spikes when the acoustic stimulus contains spectrotemporal contents matching its tuning region \cite{depireux2001spectro} (See Figure \ref{fig:f5}, top row, leftmost panel). Several experimental studies have revealed that receptive fields undergo rapid changes in their characteristics during attentive behavior in order to capture salient stimulus modulations  \cite{fritz2003rapid, fritz2005active, mesgarani2010computational}. In \cite{fritz2003rapid}, it is suggested that this rapid plasticity has a significant role in the functional processes underlying active listening. However, most of the widely-used estimation techniques (e.g., normalized reverse correlation) provide static estimates of the receptive field with a a temporal resolution of the order of minutes. Moreover, they do not systematically capture the inherent sparsity manifested in the receptive field characteristics.

\begin{figure*}
\centering
\includegraphics[width = \textwidth]{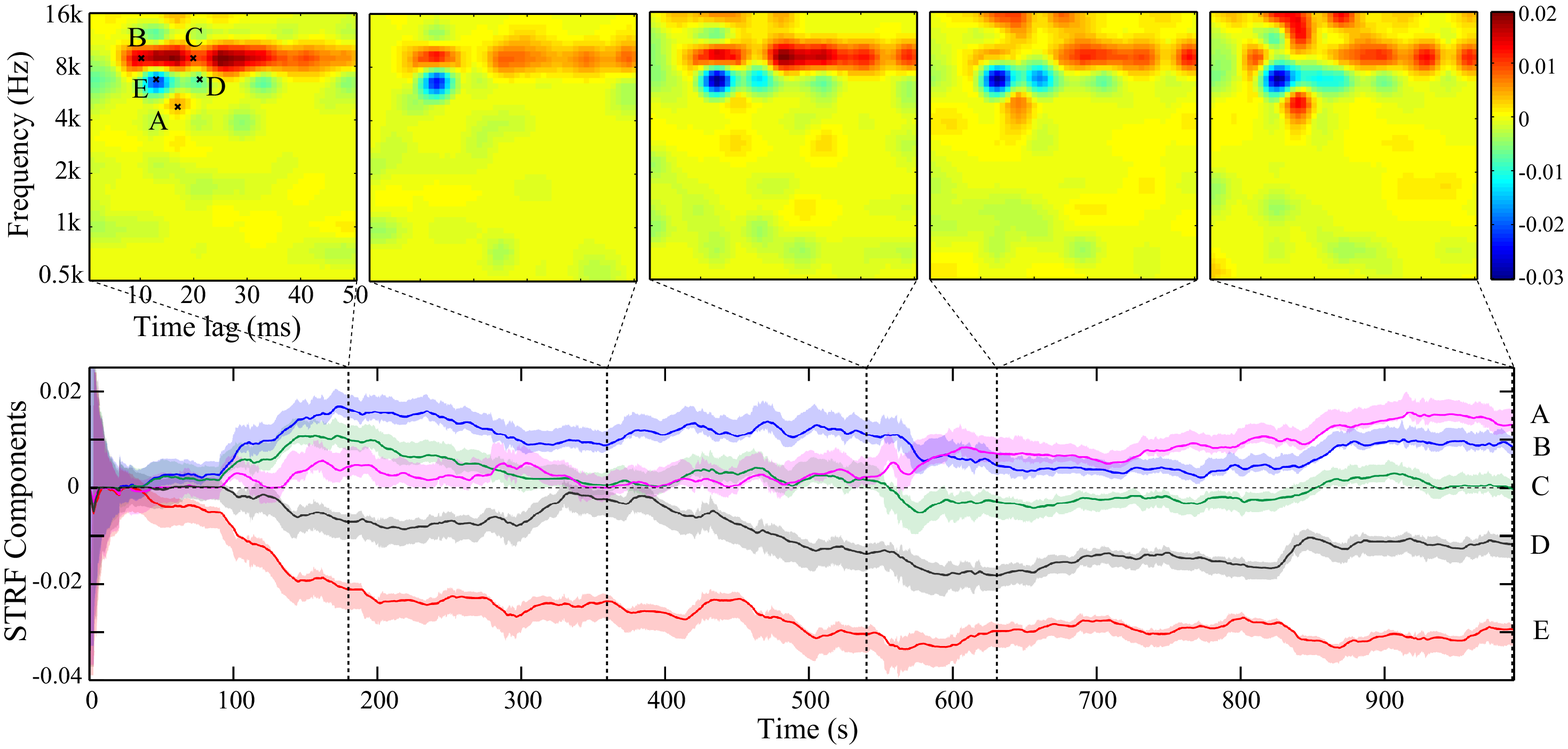}
\vspace{-7mm}
\caption{The time-course of task-dependent STRF plasticity of a ferret A1 neuron. The top row shows snapshots of the STRF at five selected points in time, marked by the dashed vertical lines in the bottom graph. The bottom graph shows the time-course of five selected points (A through E) in the STRF marked on the leftmost panel of the top row.}
\label{fig:f5}
\vspace{-6mm}
\end{figure*}

In the context of our model, the STRF can be modeled as an $(I \times J)$-dimensional matrix, where $I$ and $J$ denote the number of time lags and frequency bands, respectively. By vectorizing this matrix, we obtain an $(M-1)$-dimensional vector $\boldsymbol{\theta}_k$ at window $k$, where $M= I \times J + 1$. Augmenting the baseline rate parameter $\mu_k$, we can model the activity of the A1 neurons using the logistic CIF with a parameter $\boldsymbol{\omega}_k := [ \mu_k, \boldsymbol{\theta}_k]'$. The stimulus vector at time $t$, $\mathbf{s}_t$ is given by the vectorized version of the spectrogram of the acoustic stimulus with $J$ frequency bands and $I$ lags. In order to capture the sparsity of the STRF in the time-frequency plane, we further represent $\boldsymbol{\theta}_k$ over a Gabor time-frequency dictionary consisting of Gaussian windows centered around a regular subset of the $I \times J$ time-frequency plane. That is, for $\boldsymbol{\theta}_k = \mathbf{F} \boldsymbol{\xi}_k$, where $\mathbf{F}$ is the dictionary matrix and $\boldsymbol{\xi}_k$ is the sparse representation of the STRF. The estimation procedures of this paper can be applied to $\boldsymbol{\xi}_k$, by absorbing the dictionary matrix into the data matrix $\mathbf{X}_k$ at window $k$.

We apply our proposed adaptive filter $\ell_1$-PPF$_1$ to multi-unit spike recordings from the ferrets A1 during a series of passive listening conditions and active auditory task conditions. During each active task, ferrets attended to the temporal dynamics of the sounds, and discriminated the rate of acoustic clicks \cite{fritz2005active}. The STRFs were estimated from the passive condition, where the quiescent animal listened to a series of broadband noise-like acoustic stimuli known as Temporally Orthogonal Ripple Combinations (TORC). The experiment consisted of 2 active and 11 passive blocks. Within each passive block, 30 TORCs were randomly repeated a total of 4-5 times each. In our analysis, we pool the spiking data corresponding to the same repeated TORC within each block. Therefore, the time axis corresponds to the experiment time modulo repetitions within each block. We discretize the resulting duration of $T = 990s$ to time bins of size $\Delta = 1~ms$, and segment data to windows of size $W = 10$ samples ($10~ms$). The STRF dimensions are $50 \times 50$, regularly spanning lags of 1 to $50~ms$ and frequency bands of $0.5~kHz$ to $16~kHz$ (in logarithmic scale). The dictionary $\mathbf{F}$ consists of $13 \times 13$ Gabor atoms, evenly spaced within the STRF domain. Each atom is a two-dimensional Gaussian kernel with a variance of $D^2/4$ per dimension, where $D$ denotes the spacing between the atoms. We selected a forgetting factor of $\beta = 0.9998$, a step size of $\alpha = \frac{4(1-\beta)}{M W \bar{\sigma}^2}$, where $\bar{\sigma}^2$ is the average variance of the spectrogram components, $R = 1$ iterations per sample, and a regularization parameter of $\gamma = 40$ via two-fold even-odd cross validation.

Figure \ref{fig:f5}, top row, depicts five snapshots taken at $\{180, 360, 540, 630, 990\}~sec$ corresponding to the end-points of the $\{2, 4, 6, 7, 11\}$th passive tasks. The bottom row shows the time-course of five selected points (marked as A through D in the leftmost panel of the top row) of the STRF during the experiment. The STRF snapshots at times $180$ and $540~sec$ correspond to $90~secs$ after the two active tasks, respectively, and verify the sharpening effect of the excitatory region ($\sim30~msec, 8~kHz$) due to the animal's attentive behavior following the active task reported in \cite{fritz2003rapid}. Moreover, the STRF snapshots at times $360$ and $630~sec$ reveal the weakening of the excitatory region long after the active task and returning to the pre-active state, highlighting the plasticity of A1 neurons. Previous studies have revealed the STRF dynamics with a resolution of the order of minutes \cite{mesgarani2010computational}. Our result in Figure \ref{fig:f5} provides a temporal resolution of the order of centiseconds (3 orders of magnitude increase), while capturing the STRF sparsity in a robust fashion.

\vspace{-2mm}
\section{Concluding Remarks}\label{sec:conclusion}
In this paper, we considered recursive estimation of the time-varying parameter vectors in a logistic regression model for binary time series driven by continuous input. To this end, we integrated several techniques from compressed sensing, adaptive filtering, optimization and statistics. We constructed an objective function which enjoys from the trackability features of the RLS-type algorithms, sparsifying features of $\ell_1$-minimization, and unlike the rate-based linear models commonly used to analyze spiking data, takes into account the binary statistics of the observations. We analyzed the maximizers of the objective function in a rigorous fashion, revealing novel trade-offs between various model parameters. We constructed two adaptive filters, with respective linear and quadratic complexity requirements, for recursive maximization of the objective function in an online setting. Moreover, we characterized the statistical confidence regions for our estimates, and devised a recursive procedure to compute them efficiently. 

Although we specialized our treatment to logistic statistics and $\ell_1$-regularization, our approach to algorithm development has a plug-and-play feature: other GLM link functions (e.g., log-link) with possibly history dependent covariates and other regularization schemes (e.g., re-weighted $\ell_1$, or group-sparse regularization) can be used instead and result in a large class of adaptive filters for sparse point process regression. We tested the performance of our algorithms on simulated as well as experimentally recorded spiking data. Our simulation studies revealed that the proposed filters outperform several existing point process filters. Application of our filters to real data from the ferret primary auditory cortex provided a high-resolution characterization of the time-course of spectrotemporal receptive field plasticity, with 3 orders of magnitudes increase in temporal resolution. Although we focused on auditory neurons, we expect a similar superior performance of our filters when applied to other sensory or motor neurons (e.g., cells in primary or supplementary motor cortex \cite{gandolfo2000cortical}).

\appendices
\vspace{-2mm}
\section{Proof of Theorem \ref{thm:main}}\label{proof}

The proof is mainly based on the beautiful treatment of Negahban et al. \cite{Negahban}. The major difficulty in our case lies in the high inter-dependence of the covariates, which form a Toeplitz structure due to the setup of adaptive filtering. We address the latter issue by adopting techniques from another remarkable paper by Haupt et al. \cite{Toeplitz} to deal with the underlying interdependence. In the process, we also employ concentration inequalities for dependent random variables due to van de Geer \cite{van_de_geer}.

In order to proceed, we adopt the notion of Strong Restricted Convexity (RSC) introduced in \cite{Negahban}. For a twice differentiable log-likelihood with respect to $\boldsymbol{\omega}$, the RSC property or order $L$ implies the existence of a lower quadratic bound on the negative log-likelihood:
\begin{equation}
\label{RSC}
\nonumber \mathcal{D_L}(\boldsymbol{\Delta},\boldsymbol{\omega}):= -\mathcal{L}^\beta(\boldsymbol{\omega}+\boldsymbol{\Delta})+  \mathcal{L}^\beta(\boldsymbol{\omega})+\boldsymbol{\Delta}'\nabla\mathcal{L}^\beta(\boldsymbol{\omega})\geq \kappa\|\boldsymbol{\Delta}\|_2^2,
\end{equation}
for a positive constant $\kappa > 0$ and all $\boldsymbol{\Delta}\in\mathbb{R}^M$ satisfying:
\begin{equation}
\label{eq:cone}
\| \boldsymbol{\Delta}_{S^c} \|_1 \leq 3 \|\boldsymbol{\Delta}_S\|_1 + 4 \sigma_S(\boldsymbol{\omega}).
\end{equation}
for any index set $S \subset \{1,2,\cdots,M\}$ of cardinality $L$. The following key lemma establishes the RSC for $\mathcal{L}^{\beta}(\boldsymbol{\omega})$:
\medskip

\begin{lemma}
\textit{
Let $\{\mathbf{x}_t\}_{t=1}^{KW}$ denote a sequence of covariates and let $\boldsymbol{\omega}$ denote the corresponding logistic parameters. Then, for a fixed positive constant $d > 0$, there exist constants $C'$ and $\kappa > 0$ such that for $\beta \ge 1 - \frac{C'}{L^2 \log M}$ and $K \ge \frac{\log 2}{\log\left( \frac{1}{\beta}\right)}$ the negative log-likelihood $-\mathcal{L}^{\beta}(\boldsymbol{\omega})$ satisfies the RSC of order $L$ with constant $\frac{\kappa}{1-\beta}$ with probability greater than $1-\frac{3}{M^d}$. The constants $C'$ and $\kappa$ are only functions of $d$, $p_{\min}$, $p_{\max}$, $\sigma^2$, $B$, $W$, and are explicitly given in the proof.}
\end{lemma}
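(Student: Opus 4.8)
Writing $w_t := \beta^{K-\lceil t/W\rceil}$ for the exponential weight attached to sample $t$ (so $t$ ranges over $1,\dots,KW$), the negative log-likelihood $-\mathcal{L}^\beta$ is the $w_t$-weighted logistic loss and $\mathcal{D}_L(\boldsymbol{\Delta},\boldsymbol{\omega})$ is its Bregman divergence. The plan is to expand it by Taylor's theorem with integral remainder, $\mathcal{D}_L(\boldsymbol{\Delta},\boldsymbol{\omega}) = \sum_{t=1}^{KW} w_t (\mathbf{x}_t'\boldsymbol{\Delta})^2 \int_0^1 (1-s)\,\psi\big(\mathbf{x}_t'\boldsymbol{\omega} + s\,\mathbf{x}_t'\boldsymbol{\Delta}\big)\,ds$, where $\psi(u) := \operatorname{logit}^{-1}(u)\big(1-\operatorname{logit}^{-1}(u)\big)$. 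By assumption (2) the base point satisfies $\mathbf{x}_t'\boldsymbol{\omega} \in [\operatorname{logit}(p_{\min}),\operatorname{logit}(p_{\max})]$, so $\psi$ is bounded below there by $p_{\min}(1-p_{\max})$; to prevent the curvature from degrading under the perturbation I would either localize to a ball $\|\boldsymbol{\Delta}\|_2 \le r$ (harmless, since the estimation error of Theorem \ref{thm:main} is itself shown to lie in such a ball) or invoke a self-concordance-type lower bound for the logistic loss. Either way this reduces the task to lower bounding the weighted quadratic form $\boldsymbol{\Delta}' \widehat{\boldsymbol{\Sigma}}^\beta \boldsymbol{\Delta} = \sum_t w_t (\mathbf{x}_t'\boldsymbol{\Delta})^2$, with $\widehat{\boldsymbol{\Sigma}}^\beta := \sum_t w_t \mathbf{x}_t \mathbf{x}_t'$, uniformly over the cone (\ref{eq:cone}).

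\textbf{The expectation term supplies the $1/(1-\beta)$ scaling.} Using the independence and second-moment structure of assumption (1) (with centered stimuli), $\mathbb{E}[\mathbf{x}_t \mathbf{x}_t'] = \operatorname{diag}(1,\sigma^2,\dots,\sigma^2)$, hence $\mathbb{E}[\widehat{\boldsymbol{\Sigma}}^\beta] = \big(\sum_t w_t\big)\operatorname{diag}(1,\sigma^2,\dots,\sigma^2)$. The geometric weight sum equals $W\frac{1-\beta^K}{1-\beta}$, and the hypothesis $K \ge \frac{\log 2}{\log(1/\beta)}$ (equivalently $\beta^K \le \tfrac12$) forces it to exceed $\frac{W}{2(1-\beta)}$. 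This is precisely where both the $\frac{1}{1-\beta}$ factor in the RSC constant and the condition on $K$ originate, giving $\boldsymbol{\Delta}'\mathbb{E}[\widehat{\boldsymbol{\Sigma}}^\beta]\boldsymbol{\Delta} \ge \frac{W\min(1,\sigma^2)}{2(1-\beta)}\|\boldsymbol{\Delta}\|_2^2$.

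\textbf{Controlling the deviation is the main obstacle.} I would bound $|\boldsymbol{\Delta}'(\widehat{\boldsymbol{\Sigma}}^\beta - \mathbb{E}[\widehat{\boldsymbol{\Sigma}}^\beta])\boldsymbol{\Delta}| \le \|\widehat{\boldsymbol{\Sigma}}^\beta - \mathbb{E}[\widehat{\boldsymbol{\Sigma}}^\beta]\|_\infty \|\boldsymbol{\Delta}\|_1^2$ and use the cone to write $\|\boldsymbol{\Delta}\|_1 \le 4\sqrt{L}\|\boldsymbol{\Delta}\|_2 + 4\sigma_L(\boldsymbol{\omega})$, so $\|\boldsymbol{\Delta}\|_1^2 \le 32 L\|\boldsymbol{\Delta}\|_2^2 + 32\,\sigma_L(\boldsymbol{\omega})^2$. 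The delicate part is the max-entry deviation: each entry of $\widehat{\boldsymbol{\Sigma}}^\beta - \mathbb{E}[\widehat{\boldsymbol{\Sigma}}^\beta]$ is a $w_t$-weighted sum of products $s_{t-a}s_{t-b}$ that are strongly dependent, because consecutive covariate vectors share $M-3$ common stimulus samples (the Toeplitz structure flagged in assumption (1)), so i.i.d. concentration is unavailable. Here I would combine van de Geer's concentration inequality for dependent sequences \cite{van_de_geer} with the Toeplitz-handling machinery of \cite{Toeplitz}, together with a union bound over the $\mathcal{O}(M^2)$ entries, to obtain $\|\widehat{\boldsymbol{\Sigma}}^\beta - \mathbb{E}[\widehat{\boldsymbol{\Sigma}}^\beta]\|_\infty \le c\sqrt{\frac{W\log M}{1-\beta}}$ with probability at least $1 - 3M^{-d}$; the scale is driven by $\sqrt{\sum_t w_t^2} \asymp \sqrt{W/(1-\beta)}$. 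This is the step I expect to be hardest, since pushing the concentration through the shift-induced dependence is the whole point of the lemma, and it is where the cited auxiliary results carry the weight.

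\textbf{Matching scales and concluding.} Combining the above, the quadratic part of the deviation is at most $c' \sqrt{\frac{W\log M}{1-\beta}}\,L\,\|\boldsymbol{\Delta}\|_2^2$, which is dominated by half the expectation term exactly when $\sqrt{1-\beta}\,L\sqrt{\log M} \lesssim \sqrt{W}$, i.e. $1-\beta \le \frac{C'}{L^2\log M}$ with $C' \propto W$ --- precisely the stated lower bound on $\beta$. On this event $\boldsymbol{\Delta}'\widehat{\boldsymbol{\Sigma}}^\beta\boldsymbol{\Delta} \ge \frac{\kappa'}{1-\beta}\|\boldsymbol{\Delta}\|_2^2$ up to an additive $\sigma_L(\boldsymbol{\omega})$ tolerance (which I would track separately and which resurfaces in the error bound of Theorem \ref{thm:main}); multiplying by the curvature floor $p_{\min}(1-p_{\max})$ from the first step yields RSC of order $L$ with constant $\frac{\kappa}{1-\beta}$, the constants $C'$ and $\kappa$ depending only on $d$, $p_{\min}$, $p_{\max}$, $\sigma^2$, $B$, and $W$ as claimed.
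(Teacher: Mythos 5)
Your proposal is sound and follows the paper's proof in its skeleton: Taylor expansion of the Bregman divergence, the curvature floor $p_{\min}(1-p_{\max})$ from assumption (2), reduction to the exponentially weighted Gram matrix, entrywise concentration that respects the shift-induced dependence via the partitioning technique of \cite{Toeplitz}, and the observation that $K \ge \log 2/\log(1/\beta)$ forces $N_\beta \ge W/(2(1-\beta))$, which is indeed where both the $1/(1-\beta)$ scaling and the condition on $K$ originate. Where you genuinely diverge is the final step, from entrywise control to the cone-restricted lower bound. The paper keeps each entry's deviation below $\delta/(2rL)$, applies Gershgorin's disc theorem to conclude that every $rL \times rL$ principal submatrix of $\mathbf{C}_\beta$ has spectrum in $[1-\delta,1+\delta]$, and then invokes Lemma 4.1(ii) of \cite{bickel2009simultaneous} to convert this restricted-eigenvalue statement into RSC over the cone (\ref{eq:cone}), with the explicit constant $\kappa_0=(1-\delta)\left(1-3\sqrt{\frac{1+\delta}{(r-1)(1-\delta)}}\right)^2$. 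You instead use the H\"older bound $|\boldsymbol{\Delta}'(\widehat{\boldsymbol{\Sigma}}^\beta-\mathbb{E}[\widehat{\boldsymbol{\Sigma}}^\beta])\boldsymbol{\Delta}|\le \|\widehat{\boldsymbol{\Sigma}}^\beta-\mathbb{E}[\widehat{\boldsymbol{\Sigma}}^\beta]\|_\infty\|\boldsymbol{\Delta}\|_1^2$ together with the cone inequality $\|\boldsymbol{\Delta}\|_1\le 4\sqrt{L}\|\boldsymbol{\Delta}\|_2+4\sigma_L(\boldsymbol{\omega})$. Both routes produce the same $L^2\log M$ scaling and hence the same condition on $\beta$, and yours is more elementary and self-contained; what it costs you is a slightly weaker conclusion: the squared cone bound leaves an additive term proportional to $\sqrt{\frac{W\log M}{1-\beta}}\,\sigma_L(\boldsymbol{\omega})^2$, so you obtain RSC with a tolerance rather than the clean inequality $\mathcal{D}_L(\boldsymbol{\Delta},\boldsymbol{\omega}) \ge \frac{\kappa}{1-\beta}\|\boldsymbol{\Delta}\|_2^2$ asserted in the lemma; the two coincide only when $\sigma_L(\boldsymbol{\omega})=0$. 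Such a tolerance is admissible in the framework of \cite{Negahban}, but you would then need to check that the invocation of their Theorem 1 in the proof of Theorem \ref{thm:main} still yields the stated error bound. Two smaller remarks: your concern that the Taylor remainder evaluates the curvature at $\boldsymbol{\omega}+\tau\boldsymbol{\Delta}$ rather than at $\boldsymbol{\omega}$ is well taken — the paper silently applies assumption (2) at the intermediate point, so your localization or self-concordance fix addresses a step the paper glosses over; and the paper handles the Gram-entry concentration with Hoeffding plus the partitioning trick alone, reserving the dependent-variable inequality of \cite{van_de_geer} for the gradient bound later in the proof of Theorem \ref{thm:main}, which is only a cosmetic difference from your plan.
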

\medskip

\begin{proof}
The proof is inspired by the elegant treatment of Negahban et al. \cite{Negahban}. The major difficulty in our setting is the high interdependence of successive covariates due to the shift structure induced by the adaptive setting, whereas in \cite{Negahban}, the matrix of covariates is composed of i.i.d. rows. Using the Taylor's theorem, $\mathcal{D_L}(\boldsymbol{\Delta},\boldsymbol{\omega})$ can be written as:
\begin{equation}
\sum_{i=1}^K \sum_{j=1}^W  \beta^{K-i} \frac{\exp\big(\mathbf{x}'_{(i-1)W+j} \boldsymbol{\omega}^{\star}\big) |\boldsymbol{\Delta}' \mathbf{x}_{(i-1)W+j}|^2}{\left (1 + \exp\big(\mathbf{x}'_{(i-1)W+j} \boldsymbol{\omega}^{\star}\big) \right )^2},
\end{equation}
with  $\boldsymbol{\omega}^{\star} = \boldsymbol{\omega} + \tau \boldsymbol{\Delta}$ for some $\tau \in (0,1)$. Since by hypothesis $0 < p_{\min} \le \lambda_i \Delta \le p_{\max} < 1$, we have:
\begin{equation}
\textstyle \frac{\exp\big(\mathbf{x}'_{(i-1)W+j} \boldsymbol{\omega}^{\star}\big)}{\left (1 + \exp\big(\mathbf{x}'_{(i-1)W+j} \boldsymbol{\omega}^{\star}\big) \right )^2} \ge p_{\min} (1-p_{\max}).
\end{equation}
We can therefore further lower bound $\mathcal{D_L}(\boldsymbol{\Delta},\boldsymbol{\omega})$ by:
\begin{equation}\label{eq:bound}
\mathcal{D_L}(\boldsymbol{\Delta},\boldsymbol{\omega}) \ge  p_{\min} (1 - p_{\max}) \sigma^2 N_\beta  \left \{ \boldsymbol{\Delta}' \mathbf{C}_\beta \boldsymbol{\Delta} \right \},
\end{equation}
where $N_\beta := W \frac{1 - \beta^{K+1}}{1- \beta}$, and
\begin{equation}
\mathbf{C}_\beta :=  \frac{1}{\sigma^2 N_\beta} \sum_{i=1}^K \sum_{j=1}^W \beta^{K-i} {\mathbf{x}_{(i-1)W+j} \mathbf{x}'_{(i-1)W+j}}.
\end{equation}
Note that the matrix $\mathbf{C}_\beta$ has highly inter-dependent elements due to the Toeplitz structure in the adaptive design. In order to establish the RSC condition, we show the stronger Restricted Eigenvalue (RE) property, which in turn implies RSC \cite{bickel2009simultaneous}. Let $\delta \in (0,1)$ be fixed. To do so, we need to bound the eigenvalues of $(\mathbf{C}_\beta)_S$, the restriction of $\mathbf{C}_\beta$ to a subset of columns and rows corresponding to indices in $S \subset \{1,2,\cdots,M\}$ with $|S| = rL$, for some integer $r > 1 + \frac{9(1+\delta)}{1-\delta}$.

Without loss of generality, we assume that the first entry of the covariate vectors $\mathbf{x}_t$ is replaced by $\sigma$ instead of $1$, for presentational simplicity of the following treatment. For $m, m' \neq 1$, we have: 
\begin{align}
\nonumber (\mathbf{C}_\beta)_{m,m'} = \frac{1}{\sigma^2 N_\beta} \sum_{i=1}^K \sum_{j=0}^{W-1} \beta^{K-i} & s_{(i-1)W + j + m - M} \\
\nonumber & \times s_{(i-1)W + j + m' - M}.
\end{align}
For $m = m' = 1$, 
\begin{equation}
\nonumber (\mathbf{C}_\beta)_{1,1} = \frac{1}{\sigma^2 N_\beta} \sum_{i=1}^K  W \beta^{K-i} \sigma^2= \frac{1}{N_\beta} W\frac{1- \beta^{K+1}}{1-\beta} = 1,
\end{equation}
and for $m \neq 1$,
\begin{equation}
\nonumber (\mathbf{C}_\beta)_{m,1} = (\mathbf{C}_\beta)_{1,m} = \frac{1}{\sigma N_\beta} \sum_{i=1}^K \sum_{j=0}^{W-1} \beta^{K-i} s_{(i-1)W + j + m - M}. 
\end{equation}
We also have $\mathbb{E} \{ (\mathbf{C}_\beta)_{m,m'} \} = \delta_{mm'}$. Using Hoeffding's inequality \cite{hoeffding1963probability} we get:
\begin{align}
\nonumber \mathbb{P}\left ( \left | (\mathbf{C}_\beta)_{m,m} - 1 \right| > t \right ) & \le 2 \exp \left ( - \frac{2 t^2 N^2_\beta}{B^4\sum_{i=1}^K W \beta^{2(K-i)}}\right )\\
\nonumber & = 2 \exp \left ( - \frac{2 N^2_\beta t^2 \sigma^4}{B^4 N_{\beta^2}}\right )\\
& \le 2 \exp \left ( - \frac{2 N_\beta t^2 \sigma^4}{B^4}\right ), 
\end{align}
since $N_{\beta^2} = N_\beta \frac{1+\beta^{K+1}}{1+\beta} \le N_{\beta}$, for $\beta \in [0,1]$.
Similarly,
\begin{align}
\nonumber \mathbb{P}\left ( \left | (\mathbf{C}_\beta)_{1,m} \right| > t \right ) \le 2 \exp \left ( - \frac{2 N_\beta t^2 \sigma^2}{B^2}\right ),
\end{align}
By adopting the partitioning technique of Theorem 4 in \cite{Toeplitz}, we also get for $m \neq m'$:
\begin{align}
\nonumber \mathbb{P}\left ( \left | (\mathbf{C}_\beta)_{m,m'} \right| > t \right ) \le 4 \exp \left ( - \frac{N_\beta t^2 \sigma^4}{8B^4}\right ).
\end{align}
Let $B_0 := \max \{ B^2, \frac{B^4}{\sigma^2} \}$. Now, using the union bound we have:
\begin{align}
\nonumber \mathbb{P}\Bigg (\! \bigcup_{\underset{m \neq m'}{m,m'=1}}^M  \!\!\bigg \{ \big | &(\mathbf{C}_\beta)_{m,m'} \big | \! >\! \frac{\delta}{2rL} \bigg \} \!\Bigg ) \!\!\le 2 M^2 \!\exp \left (\! - \frac{N_\beta \delta^2 \sigma^2}{32B_0 r^2 L^2}\!\right )\!,
\end{align}
where we have used ${M \choose 2} < \frac{M^2}{2}$ and,
\begin{align}
\nonumber \mathbb{P}\left (\bigcup_{m=1}^M \! \left \{ \left | (\mathbf{C}_\beta)_{m,m} - 1 \right| > \!\frac{\delta}{2rL} \right \} \!\right )\! \le 2 M \exp \left ( \!- \frac{N_\beta \delta^2 \sigma^2}{4B_0 r^2 L^2}\!\right )\!.
\end{align}
Now, by invoking the Gershgorin's disc theorem, the eignevalues of any sub-matrix of $\mathbf{C}_\beta$ restricted to an index set $S$ with $|S| = rL$, lie in the interval $[1-\delta, 1+\delta]$ with probability at least:
\begin{align}
\nonumber & 1 - 2 M^2 \exp \left ( - \frac{N_\beta \delta^2 \sigma^2}{32B_0 r^2 L^2}\right ) - 2 M \exp \left ( - \frac{N_\beta \delta^2 \sigma^2}{4B_0 r^2 L^2}\right )\\
& \ge 1 - 3 M^2 \exp \left ( - \frac{N_\beta \delta^2 \sigma^2}{32B_0 r^2 L^2}\right ).
\end{align}
Hence, by choosing $N_\beta \ge \frac{32  B_0 \sigma^2 r^2(d+2)}{\delta^2} L^2 \log M$, the probability above is greater than $1 - \frac{3}{M^d}$.

Next, by invoking Lemma 4.1 (ii) of \cite{bickel2009simultaneous}, we have that $\mathbf{C}_\beta$ satisfies the RSC condition over the set given by Eq. (\ref{eq:cone}) with a constant given by:
\begin{equation}
\kappa_0 = (1 - \delta) \left ( 1 - 3 \sqrt{\textstyle \frac{1 + \delta}{(r-1)(1-\delta)}}\right )^2.
\end{equation}
Hence, the negative log-likelihood satisfies the RSC with a constant given by $p_{\min} (1- p_{\max}) \sigma^2 N_\beta \kappa_0$. Finally, by taking $K \ge \frac{\log 2}{\log \left (\frac{1}{\beta}\right)}$, we have that $N_\beta \ge \frac{W}{2(1 - \beta)}$, which makes $\kappa$ independent of $K$ and $\beta$, given by:
\begin{equation}
\kappa := \frac{p_{\min} (1-p_{\max}) \sigma^2 \kappa_0 W}{2},
\end{equation}
and $\beta \ge 1 - \frac{C'}{L^2 \log M}$ with $C' := \frac{W\delta^2}{64  B_0 \sigma^2 r^2(d+2)}$.
\end{proof}
\medskip

Next, the result of Theorem 1 of \cite{Negahban} implies:
\begin{equation}
\| \widehat{\boldsymbol{\omega}} - \boldsymbol{\omega} \|_2 \le \frac{2 \gamma \sqrt{L}}{\kappa} + \sqrt{\frac{2 \gamma \sigma_L(\boldsymbol{\omega})}{\kappa}},
\end{equation}
for $\gamma > 2 \| \nabla \mathcal{L}^{\beta}(\boldsymbol{\omega}) \|_\infty$. We have, for $m \neq 1$,
\begin{align}
\nonumber \left ( \nabla \mathcal{L}^{\beta}(\boldsymbol{\omega}) \right )_m &= \sum_{i=1}^K \sum_{j=1}^W \beta^{K-i} {{s}_{(i-1)W+j+m-M+1}}\\
\nonumber & \left \{ n_{(i-1)W + j+m-M+1} - \lambda_{(i-1)W+j+m-M+1} \Delta\right\}.
\end{align}
Now, let $\mathcal{F}_t$ be the $\sigma$-field generated by $s_{-M+1}, \cdots, s_t$, i.e., $\sigma(s_{-M+1},\cdots,s_t)$. We have that
\begin{align}
\nonumber \mathbb{E} \left [ \left \{ n_{t} - \lambda_{t} \Delta\right\} {{s}_{t}}\right ] &= \mathbb{E} \left [ \mathbb{E} \left [ \left \{ n_{t} - \lambda_{t} \Delta\right\} {s}_{t} | \mathcal{F}_{t} \right ] \right ]\\
\nonumber & = \mathbb{E} \left [ s_t \mathbb{E} \left [ \left \{ n_{t} - \lambda_{t} \Delta\right\} | \mathcal{F}_{t} \right ] \right ]\\
\nonumber & = \mathbb{E} \Big [ s_t \mathbb{E} \big [ \underbrace{\left \{ \lambda_{t} \Delta - \lambda_{t} \Delta\right\}}_{=0} | \mathcal{F}_{t} \big ] \Big ] = 0.
\end{align}
Hence for all $m$, $\mathbb{E} \left \{ \left ( \nabla \mathcal{L}^{\beta}(\boldsymbol{\omega}) \right )_m \right \} = 0$. We next invoke the following result for concentration of dependent random variables:
\begin{prop}
\label{hoeff_dep}
Consider a sequence of $\sigma$-fields $\mathcal{F}_0 \subset \mathcal{F}_1 \subset \cdots$. Suppose that $X_i$ is $\mathcal{F}_i$-measurable with $|X_i| \le B_i$ for some constant $B_i$, $i=1,2,\cdots$ and $\mathbb{E} \{X_i | \mathcal{F}_{i-1} \} = 0$. Then for all $t>0$,
\begin{equation*}
\mathbb{P} \left( \sum_{i=1}^n X_i \geq t \right) \leq \exp\left(-\frac{2 t^2}{\sum_{i=1}^n B^2_i }\right).
\end{equation*}
\end{prop}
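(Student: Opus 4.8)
The plan is to recognize Proposition \ref{hoeff_dep} as an Azuma--Hoeffding martingale concentration bound and to prove it by the exponential (Chernoff) method, which requires no independence among the $X_i$ and absorbs all dependence through iterated conditioning. First I would apply the exponential Markov inequality: for any free parameter $s > 0$,
\begin{equation*}
\mathbb{P}\left( \sum_{i=1}^n X_i \ge t \right) \le e^{-st}\, \mathbb{E}\left[ \exp\left( s \sum_{i=1}^n X_i \right) \right],
\end{equation*}
so that the problem reduces entirely to bounding the moment generating function of the sum.

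To control this MGF, I would peel off one summand at a time using the tower property. Since $\sum_{i=1}^{n-1} X_i$ is $\mathcal{F}_{n-1}$-measurable, it factors out of the inner conditional expectation:
\begin{equation*}
\mathbb{E}\left[ \exp\left( s \sum_{i=1}^n X_i \right) \right] = \mathbb{E}\left[ \exp\left( s \sum_{i=1}^{n-1} X_i \right)\, \mathbb{E}\left[ e^{s X_n} \,\big|\, \mathcal{F}_{n-1} \right] \right].
\end{equation*}
The substantive ingredient is a conditional version of Hoeffding's lemma: because $\mathbb{E}\{ X_i | \mathcal{F}_{i-1}\} = 0$ and $X_i$ is confined to an interval of half-width $B_i$, its conditional MGF admits a Gaussian bound $\mathbb{E}[ e^{sX_i} | \mathcal{F}_{i-1} ] \le \exp( c\, s^2 B_i^2 )$ for an explicit absolute constant $c$. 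Inserting this bound and iterating the peeling from $i=n$ down to $i=1$ yields the unconditional estimate $\mathbb{E}[ \exp( s \sum_{i=1}^n X_i ) ] \le \exp( c\, s^2 \sum_{i=1}^n B_i^2 )$.

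Combining the two displays gives $\mathbb{P}( \sum_i X_i \ge t ) \le \exp( -st + c\, s^2 \sum_i B_i^2 )$, and minimizing the right-hand side over $s > 0$ (a one-dimensional quadratic optimization) produces the stated sub-Gaussian tail. The only step with genuine content is the conditional Hoeffding lemma, whose proof rests on the convexity of $x \mapsto e^{sx}$ on $[-B_i, B_i]$ together with the centering hypothesis to bound the conditional cumulant generating function by a quadratic in $s$; everything else is bookkeeping. I would emphasize that dependence across the $X_i$ never obstructs the argument, since each peeling step conditions on the entire past $\mathcal{F}_{i-1}$ before invoking the lemma, which is exactly the feature that makes this proposition applicable to bounding $\| \nabla \mathcal{L}^{\beta}(\boldsymbol{\omega}) \|_\infty$ for the highly dependent, Toeplitz-structured covariates of the preceding analysis.
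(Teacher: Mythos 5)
Your strategy is sound and, unlike the paper, self-contained: the paper disposes of this proposition in a single line, citing it as a special case of Theorem 2.5 of van de Geer's work on exponential inequalities for dependent (martingale-type) sums, whereas you reconstruct the underlying argument — exponential Markov bound, tower-property peeling, conditional Hoeffding lemma, and optimization over $s$. That is the canonical proof of the Azuma--Hoeffding inequality, and your remark that conditioning on $\mathcal{F}_{i-1}$ at each peeling step is what absorbs the dependence is exactly the feature that makes the result applicable to the Toeplitz-structured covariates in the theorem's proof.

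There is, however, a genuine gap at the one step you wave through: the value of the ``explicit absolute constant $c$.'' Under the hypothesis $|X_i| \le B_i$ with $\mathbb{E}\{X_i \mid \mathcal{F}_{i-1}\} = 0$, the conditional Hoeffding lemma applies on the interval $[-B_i, B_i]$, which has length $2B_i$, and gives $\mathbb{E}[e^{sX_i} \mid \mathcal{F}_{i-1}] \le \exp\left(s^2 (2B_i)^2/8\right) = \exp\left(s^2 B_i^2/2\right)$, i.e.\ $c = 1/2$. Peeling and then minimizing $-st + \frac{s^2}{2}\sum_{i=1}^n B_i^2$ over $s>0$ yields
\begin{equation*}
\mathbb{P}\left( \sum_{i=1}^n X_i \geq t \right) \leq \exp\left(-\frac{t^2}{2\sum_{i=1}^n B^2_i }\right),
\end{equation*}
which is weaker by a factor of $4$ in the exponent than the displayed bound $\exp\left(-2t^2/\sum_i B_i^2\right)$. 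No choice of $c$ can rescue the stated constant, because the proposition as written is in fact false: take $n=1$ and $X_1 = \pm B_1$ with probability $1/2$ each, so that $\mathbb{P}(X_1 \ge B_1) = 1/2 > e^{-2}$. The constant $2$ in the numerator belongs to the form of Hoeffding's inequality whose denominator is the sum of squared \emph{ranges} $\sum_i (b_i - a_i)^2$; with $b_i - a_i = 2B_i$ that form coincides with the bound you can actually prove, not the one displayed. So your proof should either conclude with the correct (weaker) constant — which is all that is needed downstream, since in bounding $\left\| \nabla \mathcal{L}^{\beta}(\boldsymbol{\omega}) \right\|_\infty$ it only rescales the constant $C''$ — or the proposition should be restated in terms of ranges. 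As written, your final claim that the optimization ``produces the stated sub-Gaussian tail'' is precisely the step that fails.
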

\begin{proof}
This result is a special case of Theorem 2.5 of \cite{van_de_geer} for \textit{bounded} and possibly dependent random variables, which generalizes Hoeffding's inequality.
\end{proof}
In our case, we can similarly show that $\mathbb{E} \left [ s_t  \left \{ n_{t} - \lambda_{t} \Delta\right\} | \mathcal{F}_{t-1} \right ] = \mathbb{E} \left [ s_t \mathbb{E} \left [ \left \{ n_{t} - \lambda_{t} \Delta\right\} | \mathcal{F}_{t-1}, \mathcal{F}_t \right ] \right ] = 0$. Moreover, each summand is bounded by $2 \beta^{K-i} B$. Hence, using the result of Proposition \ref{hoeff_dep}, by taking $n=TW$ and $X_i =  s_i [n_i - \lambda_i \Delta ]$, we get: 
\begin{align}
\nonumber \mathbb{P} \left ( \left |\left ( \nabla \mathcal{L}^{\beta}(\boldsymbol{\omega}) \right )_m \right | > t N_\beta \right ) &\le 2 \exp \left ( -\frac{t^2 N^2_\beta}{2 \sum_{i=1}^K W \beta^{2(K-i)}}\right )\\
\nonumber & \le 2 \exp \left ( -\frac{N_\beta t^2}{2}\right ).
\end{align}
Using the union bound, we have:
\begin{align}
\mathbb{P} \left ( \left \| \nabla \mathcal{L}^{\beta}(\boldsymbol{\omega}) \right \|_\infty > t N_\beta \right ) \le 2M \exp \left ( -\frac{N_\beta t^2}{2}\right ).
\end{align}
By choosing $t = \sqrt{\frac{2(d+1) \log M}{N_\beta}}$, we have that
\begin{equation}
\left \| \nabla \mathcal{L}^{\beta}(\boldsymbol{\omega}) \right \|_\infty < \sqrt{{2(d+1)N_\beta \log M}},
\end{equation}
with probability at least $1 - \frac{2}{M^d}$.

Hence, for a fixed $\delta < 1$, $d > 0$, and $r > 1 + \frac{9(1+\delta)}{1-\delta}$, by taking $\beta \ge 1 - \frac{C'}{L^2 \log M}$ with $C' := \frac{W\delta^2}{64  B_0 \sigma^2 r^2(d+2)}$ and $\gamma >  C'' \sqrt{\frac{\log M}{1-\beta}}$ with $C'' := \sqrt{4(d+1)W}$, any maximizer $\widehat{\boldsymbol{\omega}}$ satisfies:
\begin{align}
\label{thm2}
\nonumber \left \|\widehat{\boldsymbol{\omega}}\!-\!\boldsymbol{\omega}\right \|_2  \! \leq \!C \sqrt{\!(1\!-\!\beta) L \log M} \!+\! \sqrt{C \sigma_L(\boldsymbol{\omega})}\sqrt[4]{\!(1\!-\!\beta) L \log M},
\end{align}with probability at least $1- \frac{3}{M^d} - \frac{2}{M^d}$, where $C$ is given by
\begin{equation}
\nonumber C := \frac{\sqrt{{128(d+1)}}}{\sqrt{W} p_{\min} (1-p_{\max}) \sigma^2 (1-\delta)\left ( 1 - 3 \sqrt{\frac{1 + \delta}{(r-1)(1-\delta)}}\right )^2}.
\end{equation}

\section{The Proximal Gradient Algorithm}\label{proximal}
In this appendix, we give an overview of the proximal gradient algorithm for minimization of convex function. The corresponding algorithm for maximization of concave functions can be obtained by negating the objective functions. Consider the general optimization problem
\begin{equation}
\min_{\mathbf{x}} \ f(\mathbf{x}) + g(\mathbf{x}), \label{Genprob}
\end{equation}
where functions $f(\mathbf{x}): \mathbb{R}^M \rightarrow \mathbb{R}$ and $g(\mathbf{x}): \mathbb{R}^M \rightarrow \mathbb{R} \cup \{\infty\}$ are assumed to be closed proper convex functions. Suppose that $f$ is differentiable with a Lipschitz continuous gradient $\nabla f$ with constant $L(\nabla f)$. The function $g$ can be possibly non-smooth.
A wide range of practical optimization problems can be cast in this form, particularly in the context of machine learning \cite{scholkopf2001learning}, where the objective function can be decomposed into a loss function and a regularization term.

The proximal gradient algorithm provides an iterative procedure for solving (\ref{Genprob}) in the following form:
\begin{align}
\mathbf{x}^{(\ell+1)} = \mathcal{P}_{\alpha^{(\ell)} g} \Big[\mathbf{x}^{(\ell)} - \alpha^{(\ell)} \nabla f(\mathbf{x}^{(\ell)})\Big] , 
\label{ProxGrad}
\end{align} 
where the parameter $\alpha^{(\ell)}$ is an appropriately chosen step size at iteration $\ell$ so that $\alpha^{(\ell)} < \frac{1}{L(\nabla f)}$, and the \textit{proximal operator} $\mathcal{P}_{\alpha g}(.)$ of function $g$ with parameter $\alpha$ is defined as
\begin{align}
\mathcal{P}_{\alpha g}(\mathbf{x}) := \underset{\mathbf{u}}{\operatorname{argmin}} \ \Big\{g(\mathbf{u}) + \frac{1}{2\alpha} \| \mathbf{u} - \mathbf{x} \|_2^2 \Big\}.
\end{align}
Among the several interpretations available for the proximal gradient method, we have adopt a quadratic approximation-based model to derive the main iterative scheme in (\ref{ProxGrad}). This interpretation \cite{figueiredo2005bound, figueiredo2007majorization}, is based on the Majorization-Minimization algorithm (see \cite{hunter2004tutorial} for a detailed discussion). In the approximation-based derivation, the $\ell$-th iteration for solving the general problem (\ref{Genprob}) can be written in the following form: 
\begin{align}
\mathbf{x}^{(\ell+1)} = \underset{\mathbf{u}}{\operatorname{argmin}} \  {\widehat{f}_\alpha} (\mathbf{x},\mathbf{x}^{(\ell)}) + g(\mathbf{x}), \label{ApproxProb}
\end{align}
where the original objective function $f$ is replaced with a quadratically-regularized linear approximation around the previous iterate $\mathbf{x}^{(\ell)}$, given by
\begin{equation}
\widehat{f}_\alpha(\mathbf{x},\mathbf{y}) := f(\mathbf{y}) + \nabla f(\mathbf{y})'(\mathbf{x}-\mathbf{y}) + \frac{1}{2\alpha} \, \|\mathbf{x}-\mathbf{y} \|_2^2,
\end{equation}
where the quadratic term is referred to as the \textit{trust region penalty}. Modulo constants, the objective function in (\ref{ApproxProb}) can be rearranged to get the proximal gradient form
\begin{align}
\nonumber \mathbf{x}^{(\ell+1)} &= \underset{\mathbf{x}}{\operatorname{argmin}} \Big\{ g(\mathbf{x}) + \frac{1}{2\alpha} \, \| \mathbf{x} - \mathbf{x}^{(\ell)} + \alpha \nabla f(\mathbf{x}^{(\ell)}) \|_2^2 \Big\} \\ 
&= \mathcal{P}_{\alpha g} \Big[\mathbf{x}^{(\ell)} - \alpha \nabla f(\mathbf{x}^{(\ell)})\Big] . 
\end{align}

The proximal operator often admits closed form expressions. As for $\ell_1$-regularization, the proximal operator takes the simple form of the \textit{soft thresholding shrinkage operator} $ \mathcal{P}_{\alpha \|.\|_1} =: \mathcal{S}_{\alpha}$ whose $i$th component is given by
\begin{align}
(\mathcal{S}_{\alpha}(x))_i  :=  \operatorname{sgn}(x_i)(x_i - \alpha)_{+},
\end{align}
with $\operatorname{sgn}$ denoting the standard signum function, and $(a)_+ := \max \{a,0\}$. In this case, the proximal algorithm leads to a family of algorithms referred to as iterative shrinkage algorithms \cite{figueiredo2003algorithm, bruckstein2009sparse,daubechies2004iterative}, where each iteration involves a simple gradient descent step followed by a shrinkage operation. 

Finally, in our setting, the function $f$ is taken to be the exponentially weighted log-likelihood $\mathcal{L}^\beta(\cdot)$. Due to the smoothness of the logistic function, the Lipschitz constant for $\nabla \mathcal{L}^{\beta}(\boldsymbol{\omega}_k)$ can be upper bounded by the trace of the Hessian $\mathbf{B}_k(\boldsymbol{\omega}_k)$ given in Eq. (\ref{eq:hess}). Noting that the elements of $\boldsymbol{\Lambda}_i$ are at most equal to $1/4$, we get $L(\nabla \mathcal{L}^{\beta}(\boldsymbol{\omega}_k)) \le \frac{1}{4} \sum_{i=1}^k \sum_{j=1}^W \beta^{k-i} x^2_{(i-1)W+j}$. Using assumption (1) of Section \ref{maintheorem} and an application of Hoeffding's inequality, shows that the sum is concentrated around its mean given by $\frac{M W \sigma^2}{4(1-\beta)}$, for large enough $k$. Therefore, we choose the step size $\alpha = \frac{(1-\beta)}{c M W \sigma^2}$, for some constant $c \ge 1/4$.

\section{Computation of Confidence Intervals}\label{app:conf}
The $\ell_1$-regularized ML estimate of Eq. (\ref{MainProb}) can be written in the following form
\vspace{-1mm}
\begin{align}
\widehat{\boldsymbol{\omega}}_k = \underset{\boldsymbol{\omega}_k}{\operatorname{argmax}} \left \{ \mathfrak{P}_{\beta} \mathcal{L}(\boldsymbol{\omega}_k)  - \gamma \|  \boldsymbol{\omega}_k\|_1 \right\},
\label{MainML}
\end{align}
where $\mathcal{L}(\boldsymbol{\omega}) := \log p\big(\mathbf{n} | \mathbf{X} , \boldsymbol{\omega} \big)$ denotes the log-likelihood function over a generic window with spiking vector $\mathbf{n}$, data matrix $\mathbf{X}$ and parameter vector $\boldsymbol{\omega}$, and the operator $\mathfrak{P}_{\beta} f(\mathbf{n}, \mathbf{X}, \boldsymbol{\omega})$ is defined for a function $f: \{0,1\}^W \times \mathbb{R}^{W\times M} \times \mathbb{R}^M \rightarrow \mathbb{R} $ as the empirical expectation exponentially weighted with a forgetting factor $\beta$:
\vspace{-1mm}
\begin{align}
\mathfrak{P}_{\beta} f(\boldsymbol{\omega}_k) = \sum_{i=1}^k \beta^{k-i} f\big(\mathbf{n}_i,\mathbf{X}_i; \boldsymbol{\omega}_k \big),
\end{align}
where we have suppressed the dependence of $f$ on $\mathbf{n}$ and $\mathbf{X}$ on the left hand side for notational simplicity. Following the treatment of Theorem 3.1 of \cite{van_de_geer}, the corresponding empirical gradient vector and Hessian are respectively given by:
\vspace{-1mm}
\begin{align}
\mathbf{g}_k(\boldsymbol{\omega}_k) &:= \mathfrak{P}_{\beta} \nabla \mathcal{L} (\boldsymbol{\omega}_k) = \sum_{i = 1}^k \beta^{k-i} \mathbf{X}_i' \boldsymbol{\varepsilon}_i(\boldsymbol{\omega}_k), \\
\mathbf{B}_k(\boldsymbol{\omega}_k) &:= \mathfrak{P}_{\beta} \nabla^2 \mathcal{L}(\boldsymbol{\omega}_k)  =  - \sum_{i = 1}^k \beta^{k-i} \, \mathbf{X}_i' \boldsymbol{\Lambda}_i(\boldsymbol{\omega}_k) \mathbf{X}_i.
\end{align}
The KKT conditions for the estimator $\widehat{\boldsymbol{\omega}}_k$ can be then written as:
\vspace{-2mm}
\begin{align}
\mathbf{g}_k(\widehat{\boldsymbol{\omega}}_k) - \gamma \widehat{\mathbf{s}}_k &= 0, \quad \| \widehat{\mathbf{s}}_k \|_{\infty} \leq 1.
\end{align}
where $\widehat{\mathbf{s}}_k  \in \partial \| \widehat{\boldsymbol{\omega}}_k\|_1$ is a subgradient vector from the subdifferential of the $\ell_1$ norm, with components $(\widehat{\mathbf{s}}_{k})_m = \operatorname{sgn}\left  ( (\widehat{\boldsymbol{\omega}}_{k})_m\right )$ for $(\widehat{\boldsymbol{\omega}}_{k})_m \neq 0$ and $| (\widehat{\mathbf{s}}_{k})_m | \leq 1$ otherwise, for $m=1,2,\dots,M$. Substituting $\mathfrak{P}_{\beta} \mathcal{L}(\boldsymbol{\omega}_k)$ by its quadratic approximation around the true parameter vector $\boldsymbol{\omega}_k$, and inverting the corresponding KKT conditions, the 'de-sparsified' estimator $\widehat{\mathbf{w}}_k$ can be obtained as:
\vspace{-1mm}
\begin{align}
\widehat{\mathbf{w}}_k :=  \widehat{\boldsymbol{\omega}}_k - \widehat{\boldsymbol{\Theta}}_k  \mathbf{g}_k(\widehat{\boldsymbol{\omega}}_k),\label{despars}
\end{align}
where the matrix $\widehat{\boldsymbol{\Theta}}_k$ is the approximate inverse of Hessian matrix $\mathbf{B}_k(\widehat{\boldsymbol{\omega}}_k) $, and can be computed using the following node wise regression procedure \cite{van_de_geer}. To compute the $m$th row of $\widehat{\boldsymbol{\Theta}}_{k}$, first the solution to the following LASSO problem is obtained:
\vspace{-1mm}
\begin{align}
\nonumber \resizebox{\linewidth}{!}{$\widehat{\boldsymbol{\psi}}_{m} :=  \underset{\boldsymbol{\psi} \in \mathbb{R}^{M-1}}{\operatorname{argmin}} \Big( - 2(\mathbf{B}_{k})_{m,\!\backslash m} \boldsymbol{\psi} + \boldsymbol{\psi}' (\mathbf{B}_{k})_{\backslash m,\! \backslash m} \boldsymbol{\psi} + 2 \gamma_m \|\boldsymbol{\psi} \|_1 \Big)$}, \label{L1Quad}
\end{align}   
where the dependence of $\mathbf{B}_k$ on $\widehat{\boldsymbol{\omega}}_k$ is suppressed for notational convenience, and the subscript notations are the same as those described in the footnote of Algorithm \ref{alg3}. Then, we define the vector $\mathbf{c} \in \mathbb{R}^M$ as:
\vspace{-1mm}
\begin{equation}
(\mathbf{c})_m = 1, \quad (\mathbf{c})_{\backslash m} = - \widehat{\boldsymbol{\psi}}_m,
\end{equation}
and the scaling constant $\tau^2_m$ as
\vspace{-1mm}
\begin{equation}
 {\tau}^2_{m} := (\mathbf{B}_k)_{m,m} - \widehat{\boldsymbol{\psi}}_{m}^{(L)} (\mathbf{B}_{k})'_{m,\!\backslash m}.
\end{equation}
Finally, the $m$th row of $\widehat{\boldsymbol{\Theta}}_k$ is given by $(\widehat{\boldsymbol{\Theta}}_k)_m := \frac{1}{\tau^2_m} \mathbf{c}$. The variance and the confidence interval at a level of $\alpha$ for the $m$th component of $\widehat{\boldsymbol{\omega}}_{k}$ can then be computed as given in lines 9 and the output of Algorithm \ref{alg3} \cite{van_de_geer}, where
\vspace{-1mm}
\begin{align}
\nonumber \mathbf{G}_k(\boldsymbol{\omega}) \!:= \mathfrak{P}_{\beta^2}\! \nabla\!\mathcal{L} (\boldsymbol{\omega}) \nabla \! \mathcal{L}'(\boldsymbol{\omega}) \!= \! \sum_{i = 1}^k \beta^{2(k-i)} \mathbf{X}_i'\boldsymbol{\varepsilon}_i(\boldsymbol{\omega})\boldsymbol{\varepsilon}_i(\boldsymbol{\omega})'\mathbf{X}_i.
\end{align}
Using Taylor expansion similar to that in the development of $\ell_1$-PPF$_1$, the matrix $\mathbf{G}_k(\widehat{\boldsymbol{\omega}}_k)$ can be recursively updated as given in line 2 of Algorithm \ref{alg3}. Finally, the node wise regression can be recursively computed using the SPARLS algorithm \cite{babadi2010sparls}, which is given in lines 3--5 of Algorithm \ref{alg3}. The parameter $\gamma_m$ can be chosen to be in the same order of $\gamma$ in (\ref{MainProb}).

\vspace{-1.5mm}
{\small
\bibliographystyle{IEEEtran}
\bibliography{ppbib}
}

\end{document}